%% file: main.tex
\documentclass{article}

\usepackage{fullpage}

\usepackage[T1]{fontenc}

\usepackage{newtxtext}

\usepackage{indentfirst}
\input{macros}

\usepackage{algorithm}
\usepackage{algorithmic}
\usepackage{graphicx}
\usepackage{subcaption}
\newcommand{\ours}[1]{\textsc{Poly-EPO}}
\newcommand{\grpo}[1]{\textsc{GRPO}}
\newcommand{\ppo}[1]{\textsc{PPO}}
\newcommand{\cispo}[1]{\textsc{CISPO}}
\newcommand{\polyppo}[1]{\textsc{Poly-PPO}}
\newcommand{\divrl}[1]{\textsc{GRPO}+\textsc{DIV}}
\newcommand{\oursfull}[1]{{Polychromic Exploratory Policy Optimization}}

\input{math_commands.tex}

\title{\Large \textbf{Poly-EPO: Training Exploratory Reasoning Models}}

\author{
  \normalsize {Ifdita Hasan Orney}$^{*}$ \quad \quad {Jubayer Ibn Hamid}$^{*}$ \quad \quad \\ \\
  \normalsize {Shreya S Ramanujam} \quad \quad {Shirley Wu} \quad \quad {Hengyuan Hu} \quad \quad \\ \\
  \normalsize {Noah Goodman} \quad \quad {Dorsa Sadigh} \quad \quad {Chelsea Finn} \\ \\
}

\date{
\small{Stanford University} \\
\vspace{0.5em}
\small $^*$Equal contribution. Correspondence to \texttt{\{ifdi1101, jubayer\}@stanford.edu.}}

\begin{document}
\maketitle

\begin{abstract} 
\noindent 
Exploration is a cornerstone of learning from experience: it enables agents to find solutions to complex problems, generalize to novel ones, and scale performance with test-time compute. In this paper, we present a framework for post-training language models (LMs) that explicitly encourages optimistic exploration and promotes a synergy between exploration and exploitation. The central idea is to train the LM to generate sets of responses that are collectively accurate under the reward function and exploratory in their reasoning strategies. We first develop a general recipe for optimizing LMs with set reinforcement learning (set RL) under arbitrary objective functions, showing how standard RL algorithms can be adapted to this setting through a modification to the advantage computation. We then propose \oursfull{} (\ours{}), which instantiates this framework with an objective that explicitly synergizes exploration and exploitation. Across a range of reasoning benchmarks, we show that \ours{} improves generalization, as evidenced by higher pass@$k$ coverage, preserves greater diversity in model generations, and effectively scales with test-time compute. 
\end{abstract}

\section{Introduction}

Exploration is a fundamental component of learning from experience. First, it is necessary for an agent to discover a successful strategy when learning to solve a highly complex problem, such as identifying a valid proof technique for a difficult theorem. Second, it enables generalization to novel problems by helping the agent build a diverse repertoire of successful strategies; an agent learning to play chess should ideally master multiple opening lines rather than rely on a single strategy that may fail against a different opponent at test time. Third, exploration is also essential for the scalability of test-time compute methods, such as verification, aggregation, and search, whose effectiveness depends on diversity in the model's generations~\citep{wang2023selfconsistencyimproveschainthought, yao2023treethoughtsdeliberateproblem, Besta_2024, zhu2025scalingtesttimecomputellm}. 

\medskip 

While reinforcement learning (RL) has substantially improved the reasoning capabilities of post-trained language models (LMs), effective exploration remains a fundamental challenge. Although pretrained base models exhibit high initial diversity in their generations, RL fine-tuning rapidly collapses this diversity onto a narrow set of high-reward behaviors~\citep{yue2025doesreinforcementlearningreally, cui2025entropymechanismreinforcementlearning,song2025outcomebasedexplorationllmreasoning}. To address this problem effectively, an algorithm should ideally satisfy the following desiderata:
\begin{enumerate}
    \item The algorithm should \emph{encourage optimism under uncertainty}~\citep{LAI19854, 10.1023/A:1013689704352}. In other words, it should assign a positive learning signal to trajectories that attempt novel reasoning strategies, even when those strategies have not yet yielded high reward. This is to ensure that the policy continues to explore promising strategies and broaden its capabilities during training. For example, when attempting to solve an algorithmic problem, an agent exploring a dynamic programming formulation should continue refining that strategy even if its initial implementations fail unit tests, rather than immediately reverting to a familiar but less scalable exhaustive search approach. 
    \item The algorithm should \emph{encourage the model to achieve a synergy between exploration and exploitation}. In other words, the learning signal for each generation should depend on whether that generation helps the model jointly achieve high task performance and explore diverse strategies. This enables the model to learn, through optimization, to balance between these goals. 
    \item The algorithm must be scalable for large scale language model post-training efforts. In particular, the algorithm's sample complexity and computational complexity should not be a significant overhead.  
\end{enumerate}

\medskip

Standard RL algorithms, such as \ppo{}~\citep{schulman2017proximalpolicyoptimizationalgorithms} or \grpo{}~\citep{shao2024deepseekmathpushinglimitsmathematical}, do not explicitly incentivize the optimistic exploration required to master diverse reasoning paths. Prior work has attempted to address this issue through reward shaping, typically by augmenting the canonical reward with a weighted exploration bonus~\citep{cheng2025reasoningexplorationentropyperspective, tuyls2025representationbasedexplorationlanguagemodels, song2025outcomebasedexplorationllmreasoning, dai2025cdecuriositydrivenexplorationefficient, zhang2025countcountsmotivatingexploration}. Broadly, these methods attempt to solve for a given input $x$, the problem $\max_\theta \mathbb{E}_{y \sim \pi_\theta(\cdot \mid x)}\left[ r(x, y) + \lambda d(x, y)\right]$, where $\pi_\theta$ is the parameterized policy, $r(x, y)$ is the task reward and $d(x, y)$ is an exploration bonus, such as an entropy bonus, UCB bonus, or semantic diversity bonus. However, this formulation treats exploitation and exploration as separate objectives and does not explicitly encourage achieving these goals synergistically. A generation can receive high learning signal by being reward-maximizing or by being exploratory, without regard to whether it helps the model strike a balance between exploration and exploitation. Consequently, the method depends on careful tuning or adaptive scheduling of $\lambda$, and even then, does not directly optimize for a persistent balance. 

\medskip

In this paper, we develop a principled and scalable approach for post-training language models that balances exploration and exploitation through \emph{set reinforcement learning} (set RL)~\citep{hamid2026polychromicobjectivesreinforcementlearning}. Set RL optimizes a policy to sample an optimal \emph{set} of generations according to a set-level objective. While \citet{hamid2026polychromicobjectivesreinforcementlearning} introduces the framework and uses it to encourage exploration in simple environments, the formulation is difficult to scale to LM post-training due to its sample complexity. Our contributions are as follows:

\begin{enumerate}
\item \textbf{A general recipe for set reinforcement learning on language models.}
We introduce a general and scalable recipe for optimizing language models under arbitrary set objectives that avoids the complexity of naively sampling sets independently. We show how standard RL algorithms, such as \grpo{}~\citep{shao2024deepseekmathpushinglimitsmathematical}, can be adapted towards this end, providing a practical training paradigm.
\item \textbf{Polychromic Exploratory Policy Optimization.}
Building on this framework, we propose \oursfull{} (\ours{}), which optimizes a \emph{polychromic objective} that scores sets as the product of their average reward and the diversity of the high-level reasoning strategies they contain. We introduce a scalable procedure for estimating the diversity in sampled responses by using a language model judge (LM-judge) to cluster all generations based on their reasoning strategy. 
\item \textbf{Synergistic exploration and exploitation.} We show that our algorithm's advantage function incorporates the covariance between average reward and diversity of strategies across sets. As such, the learning signal of \ours{} inherently harmonizes exploration and exploitation. 
\end{enumerate}

We evaluate our method on reasoning tasks that admit multiple valid solution strategies, as well as on large-scale mathematical reasoning benchmarks. Our experiments show that \ours{} improves generalization by achieving higher pass@$k$ coverage with gains of up to 20\% on mathematical reasoning test sets. The method also scales more effectively with test-time compute under majority voting. Finally, our analysis of training dynamics reveals greater diversity in model generations and broader coverage of training-set problems.

\section{Preliminaries}

In this section, we look at an overview of the standard reinforcement learning framework~\citep{10.5555/3312046} and the set reinforcement learning framework~\citep{hamid2026polychromicobjectivesreinforcementlearning}. 

\subsection{Standard Reinforcement Learning}

\paragraph{Sequential Formulation.}A Markov decision process (MDP) is specified by a state space \(\mathcal{S}\), action space \(\mathcal{A}\), transition dynamics distribution \(p(s' \mid s,a)\), reward function \(r : \mathcal{S} \times \mathcal{A} \to \mathbb{R}\), initial-state distribution \(\rho_0\), and discount factor \(\gamma \in (0,1)\). A policy \(\pi_\theta(a \mid s)\) induces a distribution over trajectories \((s_0,a_0,s_1,a_1,\dots)\), where \(s_0 \sim \rho_0\), \(a_t \sim \pi_\theta(\cdot \mid s_t)\), and \(s_{t+1} \sim p(\cdot \mid s_t,a_t)\). The standard reinforcement learning (RL)~\citep{10.5555/3312046} goal is to solve
\begin{align}
    \max_\theta V(\theta)
    := \max_\theta \mathbb{E}_{\pi_\theta}\!\left[\sum_{t=0}^{\infty}\gamma^t r(s_t,a_t)\right].
\end{align}

\paragraph{Language Model Formulation.}In reinforcement learning with verifiable rewards (RLVR) for language models, we often consider a single-turn generation problem. Let \(x \sim \mathcal{D}\) denote a prompt drawn from a training distribution \(\mathcal{D}\), and let \(y \sim \pi_\theta(\cdot \mid x)\) denote a sampled generation. Then, our goal is to solve 
\begin{align}
    \label{eq: standard_RL_for_LMs}
    \max_\theta \mathbb{E}_{x \sim \mathcal{D}}
    \mathbb{E}_{y \sim \pi_\theta(\cdot \mid x)}
    \bigl[r(x,y)\bigr].
\end{align} We can use the policy gradient, where $A(x, y)$ is the advantage of generation $y$, to optimize this objective:
\begin{align}
    \label{eq: gradient_of_std_RL}
    \nabla_\theta \mathbb{E}_{x \sim \mathcal{D}}
    \mathbb{E}_{y \sim \pi_\theta(\cdot \mid x)}
    \bigl[r(x,y)\bigr] = \mathbb{E}_{x \sim \mathcal{D}}
    \mathbb{E}_{y \sim \pi_\theta(\cdot \mid x)}
    \left[
        \nabla_\theta \log \pi_\theta(y \mid x)\, A(x,y)
    \right]. 
\end{align}

\subsection{Set Reinforcement Learning}

Set reinforcement learning (set RL)~\citep{hamid2026polychromicobjectivesreinforcementlearning} is a framework that generalizes standard RL by assigning reward to sets of sampled actions or generations, rather than to each one independently. Importantly, the reward function couples all the actions or generations under a shared learning signal. We first describe the general sequential formulation and then specialize to the single-turn LM setting studied in this paper.

\paragraph{Sequential Formulation.}
In the idealized sequential formulation, at each visited state \(s\), the policy samples a set of \(n\) actions independently from \(\pi_\theta(\cdot \mid s)\) where $n > 1$ is a fixed integer. Given $a_1,\dots,a_n \overset{\mathrm{i.i.d.}}{\sim} \pi_\theta(\cdot \mid s)$, 
a set-level reward \(f(s,a_{1:n})\) is assigned to the entire set. Repeating this procedure recursively at every timestep generates a \textit{tree} of states visited by our policy. At depth \(t\), the tree contains \(n^t\) states, which we denote by $s_t^{(1)},\dots,s_t^{(n^t)}$. For each such state \(s_t^{(i)}\), let \((a_t)^{(i)}_{1:n}\) denote the \(n\)-tuple of actions sampled from \(\pi_\theta(\cdot \mid s_t^{(i)})\). The goal is to solve the following optimization problem defined using set value function: 
\begin{align}
\label{eq: set_rl_every_state}
\max_\theta V^\sharp_{\pi_\theta}(s;f) = \max_\theta \mathbb{E}_{\pi_\theta}\left[ \sum_{t=0}^{\infty} \sum_{i=1}^{n^t} \gamma^t f\left(s_t^{(i)}, (a_t)^{(i)}_{1:n}\right) \mid s_0 = s \right]. \end{align} In particular, we require that $f$ is an objective such that $\mathbb{E}_{a_{1:n} \sim \pi_\theta(\cdot \mid s)}\left[ f(s, a_{1:n})\right]$ cannot be written as $\mathbb{E}_{a \sim \pi_\theta(\cdot \mid s)}\left[ g(s, a)\right]$ where $g$ is independent of $\theta$ for all policy parameters $\theta \in \Theta$. In other words, we cannot collapse the set RL problem, at any state, to a standard RL one. Intuitively, set RL, in the sequential formulation, encourages the policy to maximize the expected score of the \textit{tree} induced by the policy. In contrast, standard RL encourages the policy to maximize the expected reward of the \textit{single trajectory}. The sequential formulation is computationally impractical in almost all settings because the number of sampled states grows exponentially with depth.

\paragraph{Language Model Formulation.}In single-turn settings, such as in language model training, a more practical formulation is obtained by considering a set-level objective over \(n\) i.i.d.\ generations from the same prompt. Let $
y_{1:n} := (y_1,\dots,y_n)$ where each $ y_i\overset{\mathrm{i.i.d.}}{\sim} \pi_\theta(\cdot \mid x),$
and let $f(x, y_{1:n})$ be our set-level objective. We will assume that \(f\) is symmetric in its arguments \(y_1,\dots,y_n\). Then, set RL aims to solve: 
\begin{align}
    \label{eq: set_RL_for_LMs}
    \max_\theta 
    \mathbb{E}_{x \sim \mathcal{D}}
    \mathbb{E}_{y_{1:n} \sim \pi_\theta(\cdot \mid x)}
    \Bigl[
        f(x,y_{1:n})
    \Bigr].
\end{align}In other words, we use a set-level reward that is shared by all generations within the set, which is natural for training objectives that require generations to collectively exhibit some desirable property. 

Note that we require that the set objective function $f$ is such that we cannot write $\mathbb{E}_{y_{1:n} \sim \pi_\theta(\cdot \mid x)}[f(x,y_{1:n})]$ as $\mathbb{E}_{y \sim \pi_\theta(\cdot \mid x)}[g(x,y)]$ where $g$ is independent of $\theta$ for all policy parameters $\theta \in \Theta$. This distinguishes set RL from the standard RL setting, in which optimization is driven by per-generation rewards. This is an important defining feature because it is always possible to define $g_\theta(x, y) = \mathbb{E}_{y_{2:n}}\left[ f(x, y, y_{2:n})\right]$, so that $\mathbb{E}_{y_{1:n}}\left[f(x, y_{1:n}) \right]$ as $\mathbb{E}_{y}\left[g_\theta(x, y) \right]$. However, this does not reduce the problem to standard RL, since the induced reward $g_\theta$ depends on the policy itself. In particular, it is not an externally specified, policy-independent reward function. This distinguishes the set RL framework from other multi-sample frameworks where one uses leave-one-out estimators to collapse the shared learning signal to an individualized one~\citep{tang2025optimizinglanguagemodelsinference}. 

Using the score-function identity to the joint density
\(
\prod_{i=1}^n \pi_\theta(y_i \mid x)
\), we get the following policy gradient: 
\begin{align}
\label{eq: gradient_of_set_RL}
\nabla_\theta \mathbb{E}_{x \sim \mathcal{D}}
    \mathbb{E}_{y_{1:n} \sim \pi_\theta(\cdot \mid x)}
    \left[
        f(x,y_{1:n})
    \right]
&=
\mathbb{E}_{x \sim \mathcal{D}}
\mathbb{E}_{y_{1:n} \sim \pi_\theta(\cdot \mid x)}
\left[
(f(x,y_{1:n}) - \hat f(x))
\sum_{i=1}^{n}
\nabla_\theta \log \pi_\theta(y_i \mid x)\right]
\end{align} where $\hat{f}(x)$ is a set-level baseline. The defining feature of \cref{eq: gradient_of_set_RL} is that all generations in the sampled set \(y_{1:n}\) share the same scalar learning signal, $f(x,y_{1:n}) - \hat f(x)$. As such, $\hat{f}(x)$ is independent of any proper subset of $y_{1:n}$, which disallows leave-one-out estimators for example. This is fundamentally different from standard RL, where each generation receives its own advantage. In set RL, the objective \(f\) couples the samples in the set, and the gradient assigns equal credit to all elements in the set through a shared set-level signal. Leave-one-out estimators are disallowed because it breaks the shared set-level credit assignment. 

The main practical challenge with \cref{eq: gradient_of_set_RL} is statistical and computational: a naive empirical estimator would require repeatedly sampling full sets of size \(n\) for each prompt in order to estimate both the set reward and the baseline with low variance. In the next section, we show how to construct a more efficient estimator that retains the set-RL learning signal while being practical for large-scale LM training.

\section{Optimizing Language Models with Set RL}
\label{sec: set_rl_for_LMs}

In this section, we discuss a general recipe for optimizing language models (LMs) using the set RL framework under any arbitrary objective function $f$ that is symmetric in its arguments. In particular, we will find a tractable estimator of the set RL policy gradient in \cref{eq: gradient_of_set_RL}. The key ingredient will be a method for constructing the appropriate advantage function for a generation corresponding to the set RL framework. 

Suppose that our set objective, $f(x, y_1,\cdots, y_n)$, is defined over sets of size $n$. If we wanted to use an empirical estimate (e.g., a Monte Carlo estimate) of the gradient in \cref{eq: gradient_of_set_RL}, then we would need to sample several sets of size $n$ conditioned on a single prompt which can be expensive in terms of sample complexity. In this section, we consider how we can instantiate set reinforcement learning using a computationally feasible gradient estimator. 

For a fixed prompt $x$, we first draw $N$ i.i.d.\ generations $
y_1, \ldots, y_N \sim \pi_\theta(\cdot \mid x)$. We require that the number of samples, $N$, is greater than the size of sets, $n$, in the set RL framework. Next, from these samples, we construct $K$ sets, $G_1,\cdots,G_K$, of size $n < N$, in a combinatorial fashion without replacement: 
\[G_j = \{y_{(j_1)}, \ldots, y_{(j_n)}\}, \quad j_1 < \cdots < j_n, \{j_1,\ldots,j_n\} \subseteq \{1,\ldots,N\}.
\]

 The maximum number of sets that can be constructed in this manner is $K = \binom{N}{n}$. Using all $\binom{N}{n}$ sets can reduce the variance of the resulting gradient estimator. When evaluating $f$ is computationally expensive, scoring all $\binom{N}{n}$ sets may be impractical. In this case, we instead sample $K$ sets uniformly (without replacement) from the collection of all $\binom{N}{n}$ possible sets. Either of these choices leads us to an unbiased estimator of the set RL gradient as we will see later. 

Now that we have constructed $K$ sets, for each constructed set $G_j$, we compute the set-level score under the set objective function: 
\[
f(x,G_j) = f\bigl(x, y_{(j_1)}, \ldots, y_{(j_n)}\bigr).
\]

We construct a variance-reduction baseline using the sampled sets: $$ \hat f(x) := \frac{1}{K} \sum_{j=1}^K f(x,G_j).$$ In other words, the baseline is simply an average of the scores of all sets under our objective function $f$. Given the baseline, for each set $G_j$, we can now construct a set advantage function which is simply 
\begin{align}
    \label{eq: set_advantage_of_set}
    \widehat{A^\sharp}(x,G_j; f) = f(x,G_j) - \hat f(x).
\end{align}

So far, we have computed everything at the set-level, i.e. we constructed sets, scored them and computed each set's advantage. We now derive an advantage function at the level of a single generation. 

\paragraph{Marginal set advantage of a single generation.}
Intuitively, we will set the advantage of each generation to be simply the sum of the advantages of all sets containing the generation. Let $\mathcal{G}(y)$ be the collection of all sets that contain the fixed generation $y$. Note that $\mathcal{G}(y) = \{G \in G_{1:K}\mid y \in G \} \subset
G_{1:K}$. We define the marginal set advantage of the generation $y$ to be:
\begin{align}
    \label{eq: set_advantage_of_single_generation}
    \widehat{A_\mathrm{marg}^\sharp}(x, y; f)
    := \frac{1}{|\mathcal{G}(y)|}\sum_{G \in \mathcal{G}(y)}
    A^\sharp(x, G; f).
\end{align} We will soon motivate the construction of this advantage function from the point of view of getting an unbiased estimator for the set RL gradient and, in \S\ref{sec: Analysis}, through a deeper connection between set RL and standard RL. 

\paragraph{Final Estimator.} Using this, we use the following set RL gradient estimator: 
\begin{align}
    \label{eq: set_RL_for_LMs_gradient_estimator}
    \nabla_\theta \widehat{\mathbb{E}}_{x \sim \mathcal{D}, y_{1:n}\sim\pi_\theta(\cdot \mid x)}\left[ f(x, y_{1:n})\right] \propto \widehat{\mathbb{E}}_{x \sim \mathcal{D}, y_{1},\cdots,y_N \sim \pi_\theta(\cdot \mid x)}\left[ \sum_{i=1}^N \nabla_\theta \log \pi_\theta(y_i \mid x) \widehat{A^\sharp_\mathrm{marg}}(x, y_i; f)\right]
\end{align}

Now, we have all the ingredients required for optimizing an LM with respect to the set RL framework. Since the marginal set advantage gives us an advantage function over single generations, we can use existing \textit{standard} RL algorithms, such as \ppo{} \cite{schulman2017proximalpolicyoptimizationalgorithms} or \grpo{} \cite{shao2024deepseekmathpushinglimitsmathematical}, to optimize the set objective by replacing their standard advantage, $A(x, y_i)$, with $\widehat{A_\mathrm{marg}^\sharp}(x, y_i)$. The pseudocode for our general recipe is provided in \cref{alg: general_set_rl_pseudocode}.

Note that our estimator is an unbiased estimator of the true policy gradient of set RL (up to a scaling factor). This is true for both the case where we construct all $\binom{N}{n}$ sets and the case where we sample $K$ sets uniformly without replacement. We show this in the following proposition:  
\begin{prop}
\label{prop:set_rl_u_stat_unbiased}
Fix a prompt $x$, and let $y_1,\cdots,y_N \overset{\mathrm{i.i.d.}}{\sim} \pi_\theta(\cdot \mid x)$ be our independently sampled $N$ generations and let $f : \mathcal{X} \times \mathcal{Y}^{\oplus n} \rightarrow \mathbb{R}$ be our set objective. Then, $$\mathbb{E}[
\sum_{i=1}^N
\nabla_\theta \log \pi_\theta(y_i \mid x)\,
\widehat{A_{\mathrm{marg}}^\sharp}(x,y_i; f)]
=
M  \nabla_\theta
\mathbb{E}_{y_{1:n} \sim \pi_\theta(\cdot \mid x)}
\bigl[f(x,y_{1:n})\bigr],$$ where $M \in \mathbb{R}_{>0}$ is a constant scaling factor that depends on the number of sets we construct. Consequently, after also taking expectation over $x \sim \mathcal{D}$ and scaling the learning rate, the estimator is an unbiased estimator of the set RL gradient.
\end{prop}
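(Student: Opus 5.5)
We expand the marginal set advantage back into set-level terms, swap the order of summation from generations to sets, and reduce everything to the single identity
\[
\mathbb{E}\bigl[f(x,G)\,\nabla_\theta\log\pi_\theta(y_i\mid x)\bigr]=\tfrac{1}{n}\nabla_\theta J(\theta)\quad\text{for } y_i\in G,
\]
where $J(\theta):=\mathbb{E}_{y_{1:n}\sim\pi_\theta(\cdot\mid x)}[f(x,y_{1:n})]$. This identity follows from \cref{eq: gradient_of_set_RL} with zero baseline, because the generations in $G$ are i.i.d.\ and $f$ is symmetric. Write $s_i:=\nabla_\theta\log\pi_\theta(y_i\mid x)$. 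Two further facts are used throughout. First, $\mathbb{E}[f(x,G)\,s_i]=0$ whenever $y_i\notin G$, since $y_i$ is then independent of $G$ and $\mathbb{E}[s_i]=0$. Second, the random choice of index sets $G_1,\dots,G_K$ is independent of the generations. We therefore condition on the index sets and take expectations over $y_{1:N}$ only.

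\textbf{Step 1 (split the advantage).} Substituting $A^\sharp(x,G;f)=f(x,G)-\hat f(x)$ into \cref{eq: set_advantage_of_single_generation} gives, for every covered index $i$ (those with $|\mathcal{G}(y_i)|\ge 1$),
\[
\widehat{A^\sharp_{\mathrm{marg}}}(x,y_i;f)=\frac{1}{|\mathcal{G}(y_i)|}\sum_{G\ni y_i}f(x,G)\;-\;\hat f(x),
\]
because $\hat f$ is shared by all sets. Uncovered indices receive no learning signal; we adopt the convention that their advantage is zero, so they drop out.

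\textbf{Step 2 (first term).} Exchanging the sums gives
\[
\sum_{j=1}^K f(x,G_j)\sum_{i\in G_j}\frac{s_i}{|\mathcal{G}(y_i)|}.
\]
Conditional on the index sets, each $|\mathcal{G}(y_i)|$ is deterministic, so the identity above applies term by term. The expectation is
\[
\frac{\nabla_\theta J}{n}\sum_i\sum_{j:\,i\in G_j}\frac{1}{|\mathcal{G}(y_i)|}=\frac{C}{n}\,\nabla_\theta J,
\]
where $C$ is the number of covered indices.

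\textbf{Step 3 (baseline term).} The baseline contributes
\[
-\hat f(x)\sum_{i\text{ covered}}s_i=-\frac{1}{K}\sum_j f(x,G_j)\sum_{i\text{ covered}}s_i.
\]
Only the indices inside $G_j$ survive the expectation, so this equals $-\frac1K\sum_j n\cdot\frac1n\nabla_\theta J=-\nabla_\theta J$. This is the point needing the most care. The baseline is \emph{not} independent of $y_i$, because it averages over sets containing $y_i$, so it does not vanish in expectation as a standard baseline would. It instead produces a negative multiple of the true gradient, and this is exactly why the statement holds only up to the factor $M$.

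\textbf{Step 4 (combine and check positivity).} Combining the two terms gives $M=C/n-1$, averaged over the random index sets when they are sampled. In the exhaustive case $K=\binom{N}{n}$, every index is covered, so $C=N$ and $M=N/n-1>0$ because $N>n$. In the sampled case, $C\ge n$, with equality only if all sampled sets coincide. Since the $K$ sets are distinct, $K\ge2$ forces $C\ge n+1$ and hence $M\ge 1/n>0$. The case $K=1$ gives $M=0$, so I would state the requirement $K\ge2$ explicitly.

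The main obstacle is Step 3 together with the positivity check. One must recognise that the shared set-level baseline biases the estimator only by a positive rescaling, and that $M$ depends on coverage; this dependence is resolved by conditioning on the index sets. Finally, taking the expectation over $x\sim\mathcal{D}$ and absorbing $M$ into the learning rate gives unbiasedness for the full objective in \cref{eq: set_RL_for_LMs}.
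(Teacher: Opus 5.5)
Your proof is correct. It shares the paper's skeleton: expand the marginal advantage, trade the sum over generations for a sum over sets, kill the terms $\mathbb{E}[f(x,G)\,s_i]$ with $i\notin G$ by independence, and observe that the shared baseline contributes exactly $-\nabla_\theta J$ instead of vanishing.

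Where you differ is in how you handle the randomness of the index sets. The paper proves the exhaustive and sampled cases separately. In the sampled case it first averages over the uniform sampler and uses its exchangeability to show $\mathbb{E}_{\mathcal{T}}[\mathbf{1}\{G\in\mathcal{T}\}/C_i]=q_K/\binom{N-1}{n-1}$ for every $G\ni i$, where $C_i$ is the number of sampled sets containing $i$. This yields $M_K=\frac{N}{n}q_K-1$, and positivity is left to a ``routine check.'' You instead condition on the index sets and use symmetry of $f$ to get the per-member identity $\mathbb{E}[f(x,G)\,s_i]=\frac{1}{n}\nabla_\theta J$. That gives $M=C/n-1$ for every fixed collection of distinct index sets, and since $\mathbb{E}[C]=Nq_K$, averaging recovers exactly the paper's $M_K$.

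Your route buys three things: a single argument covering both cases, validity for any set-selection rule independent of the generations (not only uniform sampling), and a transparent positivity bound via $C\ge n+1$ when $K\ge 2$. The paper's sampled-case argument, in exchange, never needs symmetry of $f$, because the required exchangeability comes from the sampler rather than from $f$.

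One reconciliation, so the differing constants do not confuse you. The paper's exhaustive-case computation omits the $1/|\mathcal{G}(y_i)|$ normalization that appears in the definition. Its constant $\binom{N}{n}-\binom{N-1}{n-1}$ therefore equals $\binom{N-1}{n-1}$ times your $N/n-1$. Both are positive, so the conclusion is the same.
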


The proof of this is provided in \S\ref{appendix: proofs}. The main idea of the proof is that constructing the sets in a combinatorial fashion causes our proposed gradient to be a U-statistic estimator of the set RL gradient which is unbiased. In \S\ref{subsec: analysis_of_general_recipe}, we revisit this question from a different point of view and justify why set RL can be implemented using standard RL algorithms. 

\begin{algorithm}[t]
\caption{Gradient of On-Policy Implementation of Set RL}
\label{alg: general_set_rl_pseudocode}
\begin{algorithmic}[1]
\REQUIRE Set objective $f$, Policy $\pi_\theta$, batch of inputs $B$, number of rollouts $N$, set size $n$
\FOR{each input $x \in B$}
    \STATE Sample $y_1, \dots, y_N \sim \pi_\theta(\cdot \mid x)$
    \STATE // {\small\textit{Construct sets either by enumerating all $\binom{N}{n}$ subsets or by uniformly sampling $K$ subsets from them}}
    \STATE Construct sets $G_1,\cdots,G_K$ from $\{ y_{1:N}\}$ without replacement 
    \STATE Compute score of each set $\{f(x, G_l)\}_{l=1}^K$ and baseline $b = \frac{1}{K}\sum_{l=1}^K f(x, G_l)$
    \STATE Compute set advantage of each set $\widehat{A^\sharp}(x, G_l; f) = f(x, G_l) - b$ for $l = 1,\cdots,K$
    \STATE // {\small \textit{Let $\mathcal{G}(y)$ be all the sets in $G_1,\cdots,G_K$ that contain $y$} }
    \STATE Compute marginal set advantage $\widehat{A^\sharp_\mathrm{marg}}(x, y_j; f) := \frac{1}{|\mathcal{G}(y_j)|} \sum_{G \in \mathcal{G}(y_j)} \widehat{A^\sharp}(x, G)$ for $j =1,\cdots,N$
    \STATE $\hat{g}(x) \leftarrow \sum_{j=1}^N  \nabla_\theta \log \pi_\theta(y_j \mid x) \cdot \widehat{A^\sharp_\mathrm{marg}}(x, y_j; f)$
\ENDFOR
\STATE $\hat{g} \leftarrow \frac{1}{|B|} \sum_{x \in B} \hat{g}(x)$
\RETURN $\hat{g}$
\end{algorithmic}
\end{algorithm}

\section{Polychromic Exploratory Policy Optimization}
\label{sec: polyepo}

In this section, we introduce our method, \oursfull{} (\ours{}). We use the set RL framework discussed in \S\ref{sec: set_rl_for_LMs} to optimize an objective that explicitly encourages the model to balance exploration and exploitation. Concretely, we want the model to increase the likelihood of generations that are not only accurate with respect to our reward function, but also explore diverse reasoning strategies. We first discuss the specific objective we optimize and how to implement it in a scalable manner. Then, we put all the pieces together to present the final algorithm.  

\paragraph{Polychromic Objective.} The preceding development is agnostic to the specific choice of set objective. Now, we discuss the construction of our algorithm for fine-tuning an LM by explicitly encouraging exploration. We aim to train an LM to reason by exploring diverse reasoning strategies whilst also increasing accuracy on the training set. We use the polychromic objective adapted to the LM reasoning setting~\citep{hamid2026polychromicobjectivesreinforcementlearning}: \begin{align}
    \label{eq: polychromic_objective}
    f_\mathrm{poly}(x, y_1,\cdots,y_n) = \frac{1}{n}\sum_{i=1}^n r(x,y_i)\cdot d(x,y_1,\cdots,y_n),
\end{align} where $d(x,y_1,\cdots,y_n)$ is a function that measures the diversity of the set of generations, $y_{1:n}$. Notably, due to the advantage being shared by generations in a set, under polychromic objectives, an unsuccessful generation (i.e. a generation $y$ such that $r(x, y) = 0$) can still receive a positive set-advantage (i.e., $A^\sharp_\mathrm{marg}(x, y; f_\mathrm{poly}) > 0$) if it is highly exploratory. 

Intuitively, the polychromic objective assigns a score to a set of generations that is dependent on both its ability to explore diverse strategies and exploit the reward signal. Since this is a product and since each of these two factors is lower bounded by 0, the set must optimize both exploration and exploitation, and cannot ignore one for the other.

\paragraph{Diversity Function.} We now discuss how we measure diversity within a set of generations. Several considerations are important. First, computing the diversity score $d(x, y_{1:n}$ for each set must be computationally efficient. For variance reduction purposes, we construct all $K = \binom{N}{n}$ sets of size $n$ from the $N$ generations sampled per prompt; consequently, if evaluating each set under the polychromic objective is expensive, the runtime of the algorithm grows rapidly as $N$ increases. Second, the diversity measure should be broadly applicable across domains. Language models are post-trained on tasks spanning mathematics, coding, instruction-following, and many others, so we require a notion of diversity that transfers naturally across settings.

Our main ingredient is a language-model judge (LM-judge) used to cluster responses sampled from the policy according to their underlying reasoning strategy or semantic approach. Because LM-judges are already integrated into modern post-training pipelines—including settings such as proof generation and other non-verifiable tasks—this design allows \ours{} to leverage infrastructure that is already widely deployed. Moreover, clustering responses by semantic similarity is typically a substantially easier problem than solving the original task itself, which allows our method to exploit the generator--verifier gap.

Recall that, for each prompt $x$, we first sample $N$ independent generations and then construct sets of size $n$ from them. We begin by taking all $N$ generations associated with the same prompt and querying an LM-judge to cluster them according to the semantic similarity of their reasoning strategies. Let $\mathcal{C}(y_i) \in {1,\dots,N})$ denote the cluster assignment of generation $y_i$. We then define the diversity of a set $({y_{1},\dots,y_{n}})$ as
\begin{align}
\label{eq: diversity_function}
d\bigl(x, y_{1:n} \bigr)
=
\frac{\text{number of distinct clusters represented in }y_{1:n}}{n}
=
\frac{\left| \{ \mathcal{C}\left(y_{1}\right), \dots, \mathcal{C}\left(y_{n}\right) \} \right|}{n}.
\end{align}
By construction, $d(x, y_{1:n}) \in [0,1]$. The pseudocode for computing the polychromic objective score for each set is provided in \cref{alg: polychromic_scoring}.

To improve judge reliability, we use in-context learning to steer the LM-judge toward clustering generations based on their underlying strategy—capturing both high-level and low-level reasoning structure—rather than superficial differences such as tone, phrasing, or writing style. The full judge prompt is provided in the appendix (see \S\ref{appendix: lm-judge_and_diversity_function}), where we detail the rubric, few-shot exemplars, and other design choices used to improve clustering quality.

\begin{algorithm}[t]
\caption{Compute Polychromic Objective}
\label{alg: polychromic_scoring}
\begin{algorithmic}[1]
\REQUIRE Prompt $x$, generations $\{y_{1:N}\}$, sets $\{G_{1:K}\}$, judge $\mathcal{M}$, reward $r$
\STATE $\{\mathcal{C}(y_1), \dots, \mathcal{C}(y_N)\} \leftarrow \mathcal{M}(x, \{y_{1:N}\})$ // \textit{Step 1: Semantic Clustering using LM-Judge}
\FOR{$l = 1$ to $K$}
    \STATE $U_l \leftarrow \{ \mathcal{C}(y) \mid y \in G_l\}$ \textbf{and} $\bar{r}_l \leftarrow \frac{1}{n} \sum_{y \in G_l} r(x, y)$
    \STATE $f_\mathrm{poly}(x, G_l) \leftarrow \bar{r}_l \cdot \frac{|U_l|}{n}$ 
\ENDFOR
\RETURN $\{f_\mathrm{poly}(x, G_l)\}_{l=1}^K$
\end{algorithmic}
\end{algorithm}

\paragraph{Final Algorithm.} The final algorithm, \oursfull{} (\ours{}), computes the marginal set-RL advantage of each generation under the polychromic objective and then plugs this signal into any standard RL update rule. In designing \ours{}, we prioritize simplicity and scalability above all else. We sample $N$ generations independently per prompt, exactly as in standard RL algorithms such as \grpo{} and \cispo{}. Moreover, as in \grpo{}, we assign the same advantage to every token within a generation. This amounts to instantiating the general set-RL recipe in \cref{alg: general_set_rl_pseudocode}, with set scores computed using the procedure in \cref{alg: polychromic_scoring}.

\section{Analysis}
\label{sec: Analysis}

\subsection{Analysis of the General Recipe}
\label{subsec: analysis_of_general_recipe}

In this section, we ask the following question: \textit{why can standard RL algorithms be used to optimize set RL by simply replacing the advantage to be the marginal set advantage as shown in \cref{eq: set_advantage_of_single_generation}?} First, we recall a result analyzing logit shifts in standard reinforcement learning. Suppose, we parametrize our policy (in this case, the LM) as a softmax distribution. Then, \cite{cui2025entropymechanismreinforcementlearning} showed that, after one gradient update with learning rate $\alpha$, the shift in probability mass can be expressed as:
\begin{align}
    \label{eq: logit_shift_in_RL}
    \log \pi_\theta^{k+1}(y \mid x) - \log \pi_\theta^{k}(y \mid x) = \alpha \pi_{\theta}^k(y \mid x) A(x, y; \pi_\theta^k)
\end{align}
where $A(x, y; \pi_\theta^k)$ is the advantage of the generation with respect to the policy $\pi_\theta^k$ under the standard RL algorithm. 

Now, we look at logit shifts in the set reinforcement learning framework. The result follows from the observation in \cite{hamid2026polychromicobjectivesreinforcementlearning} on the \textit{implicit individual credit} that a generation receives in the set RL framework. In adapted form, the logit shift looks as follows: 

\begin{lem}
    \label{lem: probability_mass_shift_in_setRL}
    In set reinforcement learning, assuming the objective function $f$ is symmetric in its arguments and the learning rate is $\alpha$, the shift in probability mass on a fixed generation $y$ after one step gradient update from $\pi_\theta^k$ to $\pi_\theta^{k+1}$ can be written as:
    \begin{align}
        \label{eq: logit_shift_in_set_RL}
        \log \pi_\theta^{k+1}(y \mid x) - \log \pi_\theta^{k}(y \mid x) 
        &= \alpha n\pi_{\theta}^k(y \mid x) \left[ \mathbb{E}_{y_{2:n}\sim \pi_\theta^k(\cdot \mid x)}\left[f(x, y, y_{2:n}) \right] - \mathbb{E}_{y_{1:n}\sim \pi_\theta^k(\cdot \mid x)}\left[f(x,  y_{1:n}) \right]\right]- \alpha C(\theta^k).
    \end{align}
\end{lem}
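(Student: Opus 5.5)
We follow the same softmax (tabular) parametrization used for \cref{eq: logit_shift_in_RL}, so that $\theta$ is a vector of logits $z_{y'}$, one per generation $y'$, and $\pi_\theta(y' \mid x) = \exp(z_{y'})/\sum_{u}\exp(z_u)$. We read the one-step update as exact gradient ascent on the set objective $J(\theta) := \mathbb{E}_{y_{1:n}\sim\pi_\theta}[f(x,y_{1:n})]$ with learning rate $\alpha$. We then take the first-order change in $\log\pi$, exactly as \cite{cui2025entropymechanismreinforcementlearning} do. The plan has three steps:
\begin{enumerate}
\item Compute $\partial J/\partial z_{y'}$ in closed form.
\item Push the resulting logit update through the softmax.
\item Collect everything that does not depend on $y$ into the constant $C(\theta^k)$.
\end{enumerate}

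\textbf{Step 1.} Write $J(\theta) = \sum_{y_{1:n}} \prod_i \pi_\theta(y_i\mid x)\, f(x,y_{1:n})$. By symmetry of $f$, each of the $n$ factors contributes the same term, so
$$\frac{\partial J}{\partial \pi(y')} = n\,\mathbb{E}_{y_{2:n}}[f(x,y',y_{2:n})] =: n\,g_\theta(y').$$
This is the ``implicit individual credit'' $g_\theta$ discussed earlier. The softmax Jacobian is $\partial \pi(u)/\partial z_{y'} = \pi(u)(\mathbf{1}[u=y'] - \pi(y'))$. Applying it gives
$$\frac{\partial J}{\partial z_{y'}} = n\,\pi(y')\bigl(g_\theta(y') - \mathbb{E}_{u\sim\pi}[g_\theta(u)]\bigr).$$
Moreover $\mathbb{E}_{u}[g_\theta(u)] = \mathbb{E}_{y_{1:n}}[f(x,y_{1:n})]$. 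So the logit update is
$$z^{k+1}_{y'} - z^{k}_{y'} = \alpha n\,\pi^k(y')\bigl[\mathbb{E}_{y_{2:n}}f(x,y',y_{2:n}) - \mathbb{E}_{y_{1:n}}f(x,y_{1:n})\bigr].$$
This is the bracket in \cref{eq: logit_shift_in_set_RL}. Equivalently, one can derive it from the score-function form \cref{eq: gradient_of_set_RL}, using $\nabla_z\log\pi(y_i) = e_{y_i} - \pi$ and conditioning on $y_i = y'$.

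\textbf{Step 2.} Since $\log\pi(y) = z_y - \log\sum_u e^{z_u}$, the change in $\log\pi(y)$ equals the change in $z_y$ minus the change in the log-partition function. The second change is the same for every $y$. To first order in $\alpha$ it equals $\sum_u \pi^k(u)\,\Delta z_u$. We set $\alpha C(\theta^k)$ equal to this common term. Explicitly,
$$C(\theta^k) = n\sum_u \pi^k(u)^2\bigl(g(u) - \mathbb{E} g\bigr),$$
which depends only on $\theta^k$ and not on $y$. The first-order equality is the same approximation that underlies \cref{eq: logit_shift_in_RL}; we state it at the same level of rigor.

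\textbf{Main obstacle.} The delicate step is Step 1: justifying that the per-generation gradient reduces to $n\pi(y)(g(y) - \mathbb{E} g)$. This needs the symmetry of $f$ to merge the $n$ positional contributions into a single factor of $n$. It also needs the $-\pi(y')$ term of the softmax Jacobian to produce the baseline $\mathbb{E}_{y_{1:n}}f$. Handling the case where $y$ appears multiple times in $y_{1:n}$ is automatic if we differentiate the product of probabilities rather than condition on positions, so we will argue via the product-rule expansion.
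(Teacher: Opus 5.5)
Your derivation is correct, but it follows a different route from the paper. The paper's proof computes nothing. It imports the result of \cite{hamid2026polychromicobjectivesreinforcementlearning} and observes that, for the homogeneous set $y^{\oplus n}$, the scaffold value $\Lambda_f(y^{\oplus n};\pi_\theta^k,x)$ factors as $\pi_\theta^k(y\mid x)$ times the marginal set advantage of $y$.

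You instead work from first principles in the tabular softmax parametrization:
\begin{itemize}
\item You differentiate the multilinear objective $\sum_{y_{1:n}}\prod_i\pi(y_i\mid x)f(x,y_{1:n})$ with respect to $\pi(y')$.
\item You use symmetry of $f$ to collapse the $n$ positional terms into $n\,\mathbb{E}_{y_{2:n}}[f(x,y',y_{2:n})]$.
\item You let the $-\pi(y')$ part of the softmax Jacobian produce the baseline $\mathbb{E}_{y_{1:n}}[f(x,y_{1:n})]$.
\end{itemize}

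Your route buys self-containment. It makes explicit the assumptions the paper leaves implicit: tabular logits and exact vanilla gradient ascent on the set objective. It shows exactly where symmetry of $f$ enters. It also identifies $C(\theta^k)$ concretely as the log-partition shift, which to first order is $C(\theta^k)=n\sum_u\pi^k(u)^2\bigl(g(u)-\mathbb{E}g\bigr)=n\,\mathrm{Cov}_{u\sim\pi^k}\bigl(\pi^k(u),g(u)\bigr)$.

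The paper's route buys brevity and a direct link to the scaffold-value quantity of the earlier work, which is defined for general, not necessarily homogeneous, sets. The cost is that the lemma cannot be checked without that reference.

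One small refinement: your Step 2 need not be approximate. The logit update is exact, and the log-partition change $\log\sum_u\pi^k(u)e^{\Delta z_u}$ is the same for every $y$. Defining $\alpha C$ as this exact quantity makes the identity exact, at the price of $C$ also depending on $\alpha$.
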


The proof can be found in \S\ref{appendix: proofs}. This result tells us that, after one update using set RL, the log likelihood of a generation being sampled changes according to the contribution of the generation to the score of sets that contain it. In particular, the marginal set advantage builds the bridge between these two frameworks. Concretely, comparing \cref{eq: logit_shift_in_RL} with \cref{eq: logit_shift_in_set_RL}, we can simply define the marginal set advantage of a generation as 
\begin{align}
    \label{eq: marginal_set_adv_def}
    A_\mathrm{marg}^\sharp(x, y; f, \pi_\theta^k) := \mathbb{E}_{y_{2:n}\sim \pi_\theta^k(\cdot \mid x)}\left[f(x, y, y_{2:n}) \right] - \mathbb{E}_{y_{1:n}\sim \pi_\theta^k(\cdot \mid x)}\left[f(x,  y_{1:n}) \right].
\end{align}This enables us to use any standard RL algorithm to update our policy using set RL by simply replacing the advantage function with our marginal set advantage, since in doing so, the logit shift is equivalent. 

This derivation points us to a fundamental difference between standard RL and set RL. In the standard RL setting, the logit shift for a particular generation $y$ is simply governed by $A(x, y)$ which is a measure of how much better the generation is compared to the baseline in terms of returns. In contrast, in the set RL framework, the logit shift for generation $y$ is governed by the contribution of $y$ to sets' scores under the objective $f$, i.e., the marginal set advantage of $y$. 

Apart from being useful for deriving our algorithm, this also helps us understand what a set objective function explicitly encourages. As an example, consider the pass@$n$ objective $f_\mathrm{pass@n}(x, y_{1:n}) = \max_{i \in \{1,\cdots,n\}} r(x, y_i)$ \citep{tang2025optimizinglanguagemodelsinference, walder2025passkpolicyoptimizationsolving, chen2025passktrainingadaptivelybalancing}. In the set RL framework, the marginal set advantage of any generation that achieves reward 0 is $A_\mathrm{marg}^\sharp(x, y_\mathrm{incorrect}) = (1-p)^n - (1-p)^{n-1},$ where $p$ is the probability of sampling a generation that achieves reward 1. As such, the marginal set advantage of an incorrect generation is non-positive. On the other hand, the marginal set advantage of a correct generation is $A_\mathrm{marg}^\sharp(x, y_\mathrm{correct}) = (1-p)^n,$ which is always non-negative.

Observe that the marginal set advantage of a generation gives us insight into what the algorithm encourages precisely because of \cref{eq: logit_shift_in_set_RL}. We will now use this as our apparatus to understand what different objectives optimize for in the next section.

\subsection{Exploration-Exploitation Synergy}
\label{subsec: analysis_of_adv_functions}

In this section, we will analyse how the polychromic objective encourages the policy to synergistically explore and exploit. To do so, we analyze which generations receive increased probability mass under the polychromic objective and compare this behavior with alternative approaches for encouraging exploration in reinforcement learning. This analysis clarifies the types of model behaviors each method directly incentivizes.

\paragraph{Advantage under Polychromic Objective in Set RL.} Recall that in set reinforcement learning, \cref{lem: probability_mass_shift_in_setRL} shows that a generation $y$ receives increased probability mass (i.e., $\log \pi_\theta^{k+1}(y \mid x) - \log \pi_\theta^k(y \mid x) > 0$) if and only if it has a positive marginal set advantage and non-zero probability under the rollout policy. Consider a fixed generation $y \in \mathcal{Y}$ with $\pi_\theta(y \mid x) > 0$ and the polychromic objective $f_\mathrm{poly}$. The marginal set advantage of $y$, denoted $A_\mathrm{marg}^\sharp(x, y; f_\mathrm{poly}, \theta)$, can be decomposed as follows:

{\begin{align}
    \label{eq: marginal_adv_set_rl_polychromic_objective}
    & A_\mathrm{marg}^\sharp(x, y; f_\mathrm{poly}, \theta) \nonumber \\
    & =
    \underbrace{
    ( \frac{1}{n}r(x, y)  + \frac{n-1}{n}\mathbb{E}_{Y}\!\left[ r(x, Y) \right])
    \mathbb{E}_{Y_{2:n}}\!\left[ d(x, y, Y_{2:n}) \right]
    }_{\substack{\text{\textit{Term 1}: Mean reward of sets containing }y \text{ } \times \text { Mean diversity of sets containing }y}}
    - \quad 
    \mathbb{E}_{Y}\!\left[ r(x, Y) \right] \mathbb{E}_{Y_{1:n}}\!\left[ d(x,Y_{1:n}) \right] \nonumber \\
    &\quad
    + 
    \underbrace{\mathrm{Cov}_{Y_{2:n}}\! (
    \frac{1}{n}r(x, y) + \frac{1}{n}\sum_{i=2}^n r(x, Y_i),
    d(x,y, Y_{2:n})
    )}_{\substack{\text{\textit{Term 2}: Exploration--Exploitation synergy in sets containing }y}} \quad 
    - \quad \mathrm{Cov}_{Y_{1:n}}\!\left( r(x, Y_1), d(x, Y_{1:n})\right).
\end{align}}

This decomposition reveals how set RL with polychromic objective evaluates each generation. We now interpret the two terms that depend on the fixed generation $y$.

\begin{itemize}
\item \textit{Term 1} reflects the expected contribution of generation $y$ to the product of reward and diversity in sets that contain it. Importantly, this term depends on both the reward and diversity contributions of other generations in the set. Consequently, even if $y$ itself does not contribute reward (or diversity), this term can still be positive provided other sampled generations contribute reward (or diversity) in expectation. Thus, generations that explore optimistically receive a positive learning signal even if they failed to yield high rewards. 

\item \textit{Term 2} captures how the presence of generation $y$ affects the covariance between reward and diversity across sets containing $y$. Intuitively, this term favors generations that enable sets to simultaneously achieve high reward and high diversity. Thus, generations that help achieve a synergy between exploration and exploitation receive positive learning signal.
\end{itemize}

This directly reflects the two desiderata that motivate our method. First, the objective supports \emph{optimistic exploration}: trajectories that attempt novel reasoning strategies can receive positive learning signal even before they become individually successful. Second, the objective \emph{intrinsically encodes the exploration--exploitation trade-off}: the advantage assigned to a trajectory depends in part on whether it helps the model form sets that jointly achieve high reward and high diversity, rather than on a manually tuned weighting coefficient. In this sense, the policy update itself internalizes the balance between exploration and exploitation.

\paragraph{Advantage under Reward Shaping in Standard RL.} Next, consider a standard RL approach with an exploration bonus. In this framework, the objective is $$\mathbb{E}_{y \sim \pi_\theta (\cdot \mid x) }\left[ \Tilde{r}(x, y) \right] = \mathbb{E}_{y \sim \pi_\theta (\cdot \mid x) }\left[ r(x, y) + \lambda \cdot d(x, y)\right]$$ where $d(x,y)$ measures the diversity or exploration contribution of generation $y$. We assume $d(x,y)\in[0,1]$, although the analysis does not depend on the specific implementation of $d$. Under this objective, a generation receives increased probability if its standard RL advantage is positive~\citep{cui2025entropymechanismreinforcementlearning}. The advantage can be written as \begin{align}
\label{eq: adv_std_rl_with_divbonus_added}
A(x, y; \tilde r, \theta) = 
r(x, y) - \mathbb{E}_Y[r(x,Y)]
+
\lambda\big(d(x,y) - \mathbb{E}_Y[d(x,Y)]\big).
\end{align}

We now highlight several fundamental differences between the marginal set advantage under \ours{} (\cref{eq: marginal_adv_set_rl_polychromic_objective}) and the advantage induced by reward-shaped standard RL (\cref{eq: adv_std_rl_with_divbonus_added}). Let $p$ denote the probability that the model samples a correct generation for prompt $x$.

\begin{itemize}
\item Consider a generation $y$ with $r(x,y)=0$ (an incorrect response) but high diversity contribution, i.e., $d(x,y)$ is large. Under the reward-shaped RL objective, $y$ receives positive advantage only if $d(x, y) \geq 
\frac{p}{\lambda} + \mathbb{E}_Y[d(x, Y)]$. Thus, the exploration coefficient $\lambda$ critically determines whether exploratory but incorrect generations are encouraged. Moreover, as model accuracy improves (i.e., $p$ increases), this threshold becomes harder to satisfy (for e.g, with $\lambda = 0.5$, if $p > 0.5$, then this bound is impossible to satisfy), making it increasingly difficult for incorrect but diverse generations to receive positive advantage.

In contrast, under set reinforcement learning, as a model's accuracy improves, exploratory generations can still receive positive learning signal even when incorrect. Because the set objective shares credit among all generations in a set, incorrect but diverse generations can benefit from the presence of correct generations in the same set. Consequently, the incentive for exploration does not vanish as success rate improves and does not rely on tuning a coefficient such as $\lambda$. 

\item Furthermore, \ours{} naturally assigns advantage based on how a generation contributes to achieving a synergy between exploration and exploitation over sets. This behavior arises from two components: the set RL framework and the multiplicative structure of the polychromic objective. Notably, this effect does not arise in the reward-shaped RL term and would not arise if one simply applied standard RL with a reward of the form $r(x,y)\cdot d(x,y)$, as considered in some prior work~\citep{li2025jointlyreinforcingdiversityquality}. In such formulations, the covariance term in \cref{eq: marginal_adv_set_rl_polychromic_objective} does not appear, and the learning signal cannot explicitly encourage generations that help coordinate exploration and exploitation across sets.
\end{itemize}

\section{Experiments}

We now empirically evaluate \ours{}. In \S\ref{subsec: experiments_maths_reasoning}, we train and evaluate on mathematical reasoning benchmarks. In \S\ref{subsec: experiments_infinite_strategy}, we study synthetic domains with verifiable rewards and infinitely many reward-maximizing strategies.

\subsection{Mathematical Reasoning}
\label{subsec: experiments_maths_reasoning}

\begin{figure}[t]
    \centering    
    \includegraphics[width=1.0\linewidth]{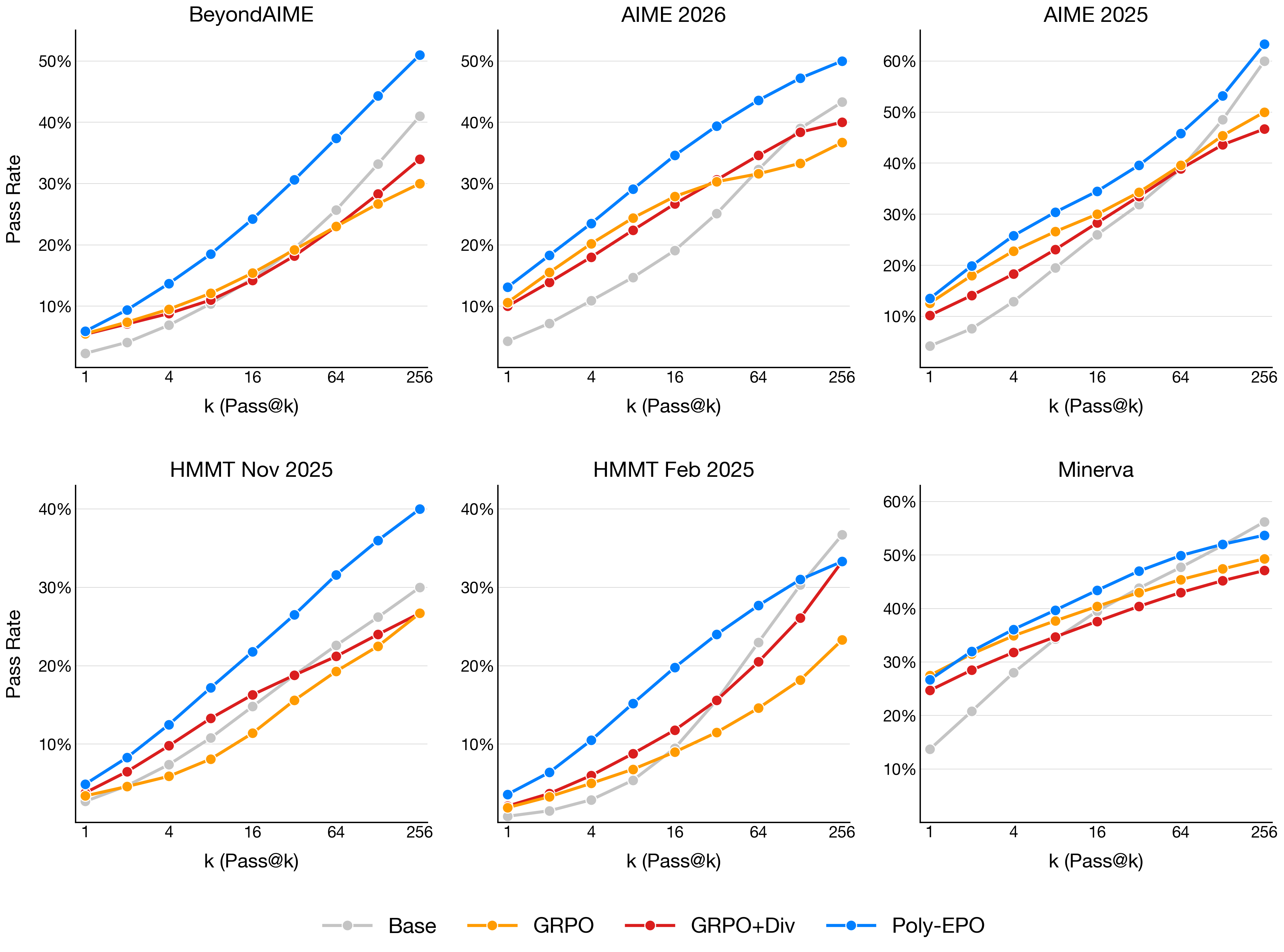}
    \caption{\textbf{Pass@$k$ evaluations on test sets}. The x-axis is the number of attempts $k$ used in the evaluation while the y-axis is the coverage of the test set. The base model used here is \texttt{Qwen3-4B-Base} and the LM-judge used by \ours{} and \divrl{} for clustering responses is \texttt{Qwen-3-4B-Instruct}.}
    \label{fig:maths_reasoning_pass@k}
\end{figure}

We evaluate \ours{} on large-scale mathematical reasoning tasks. Our goal is to test whether the exploration induced by \ours{} translates into better reasoning performance, in particular through improved generalization to unseen problems and more effective use of test-time compute. We use \texttt{Qwen-3-4B-Base} as our pretrained base model and train on \texttt{POLARIS-53k}~\citep{Polaris2025}, a dataset of mathematical reasoning problems. We use 128 prompts per batch, 8 rollouts per prompt, and 2 epochs of RL training. We evaluate on a broad suite of held-out benchmarks: BeyondAIME~\citep{seed2025seed15thinkingadvancingsuperbreasoning}, AIME 2026, AIME 2025, HMMT November 2025~\citep{balunovic_srimatharena_2025}, HMMT February 2025~\citep{balunovic_srimatharena_2025}, and Minerva~\citep{lewkowycz2022solvingquantitativereasoningproblems}. 

We compare against \grpo{}~\citep{shao2024deepseekmathpushinglimitsmathematical} and against a standard RL baseline with an exploration bonus, which we denote by \divrl{}. Its objective is \begin{align}
    \label{eq: baseline_exploration_objective}
    \mathbb{E}_{x \sim \mathcal{D}}
    \mathbb{E}_{y \sim \pi_\theta(\cdot \mid x)}
    \left[
        r(x,y) + \lambda\, d(x,y \mid y_{1:N})
    \right].
\end{align}Here, $d(x,y \mid y_{1:N})$ is a diversity bonus defined using the same LM-judge-based clustering procedure as in \ours{}. Specifically, given rollouts $y_1,\dots,y_N$ sampled for prompt $x$, we first cluster these generations. Let $\mathrm{Cluster}(y)$ be the set of generations that belong to the same cluster as $y$. Then define
\[
d(x,y \mid y_{1:N})
=
\frac{\frac{N}{\left| \mathrm{Cluster}(y) \right|} - 1}{N -1}.
\]
Thus, the bonus assigned to a generation decreases as the size of its cluster increases. Note that $d(x, y \mid y_{1:N}) \in [0, 1]$. This diversity bonus is non-parametric and, in particular, non-differentiable with respect to the policy parameters. Note that the objective is not a set RL objective. Conceptually, \divrl{} is similar to prior approaches that add exploration bonuses to the reward~\citep{song2025outcomebasedexplorationllmreasoning, tuyls2025representationbasedexplorationlanguagemodels}. We optimize \divrl{} using \grpo{}. 

We instantiate \ours{} and \divrl{} using \texttt{Qwen-3-4B-Instruct} as the LM-based judge for clustering generations. Note that the judge must possess strong instruction-following capabilities, as it is required to cluster responses according to a provided rubric that specifies grouping based on the underlying high-level reasoning strategy. Note that both \ours{} and \divrl{} use the same clustering method - the only difference is the diversity function that uses the clustering; \ours{} uses the clustering to score diversity of sets whereas \divrl{} computes the diversity contribution of single generations. Additional implementation details are provided in \S\ref{appendix: implementation-details}.

\paragraph{Pass@$k$ Evaluations.}We summarize the pass@$k$ results in \cref{fig:maths_reasoning_pass@k}. Overall, the pass@$k$ performance of \ours{} improves more strongly as $k$ increases, indicating that it produces generations with higher effective diversity. In contrast, \grpo{} and \divrl{} exhibit substantial pass@$k$ degradation; indeed, the pretrained base model (\texttt{Qwen-3-4B-Base}) begins to outperform them as early as $k=32$. This improved diversity under repeated sampling is especially valuable in domains such as mathematics, where one can pair generation with strong external verifiers such as Lean. 

\begin{figure} [t]
    \centering
    \includegraphics[width=0.9\linewidth]{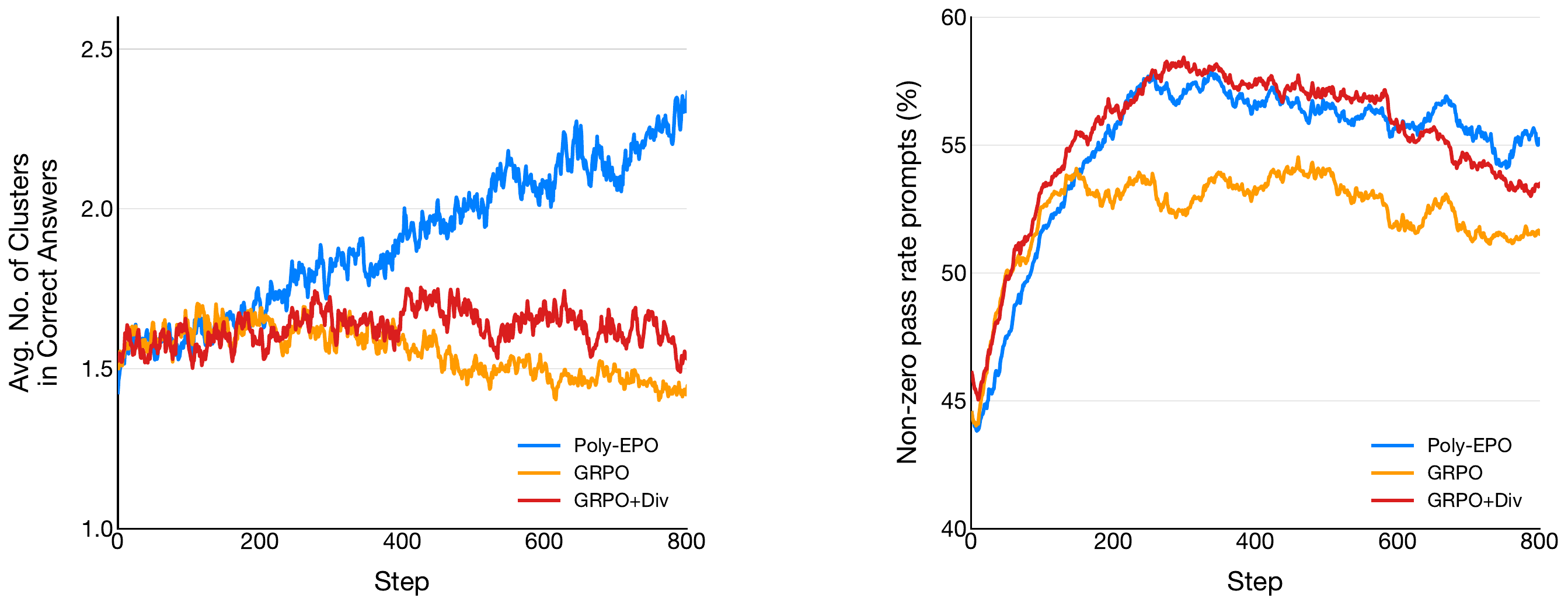}
    \caption{\textbf{Training dynamics on \texttt{POLARIS-53k}.} \textbf{Left:} Average number of unique reasoning-strategy clusters among the correct generations sampled for each prompt during training. Clusters are assigned by \texttt{Qwen-3-4b-Instruct}, which groups generations according to their high-level reasoning strategy. Higher values indicate a greater diversity of successful reasoning approaches. \textbf{Right:} Fraction of training prompts for which the model generates at least one correct rollout among its sampled responses during training.}
    \label{fig: training_dynamics_maths_reasoning}
\end{figure}

\paragraph{Training Dynamics.}We now analyze the training dynamics. We first measure the average number of unique reasoning-strategy clusters among the correct generations sampled for each prompt, where cluster assignments are produced by the model \texttt{Qwen-3-4b-Instruct}. The results are shown in \cref{fig: training_dynamics_maths_reasoning} (left). For \grpo{}, this clustering is used purely for post-hoc evaluation and logging; no diversity signal is used during training. We observe that under \grpo{}, the number of clusters begins to decline after roughly 200 training steps, indicating a contraction in the diversity of successful strategies. Under \divrl{}, the metric remains approximately flat relative to its initial value, suggesting limited expansion of the successful strategy set. In contrast, \ours{} steadily increases the number of unique clusters to substantially higher levels than either baseline, indicating more effective exploration and the discovery of a broader range of successful reasoning approaches.

\begin{figure*}[t]
    \centering

    \begin{subfigure}{0.49\textwidth}
        \centering
        \includegraphics[width=\linewidth]{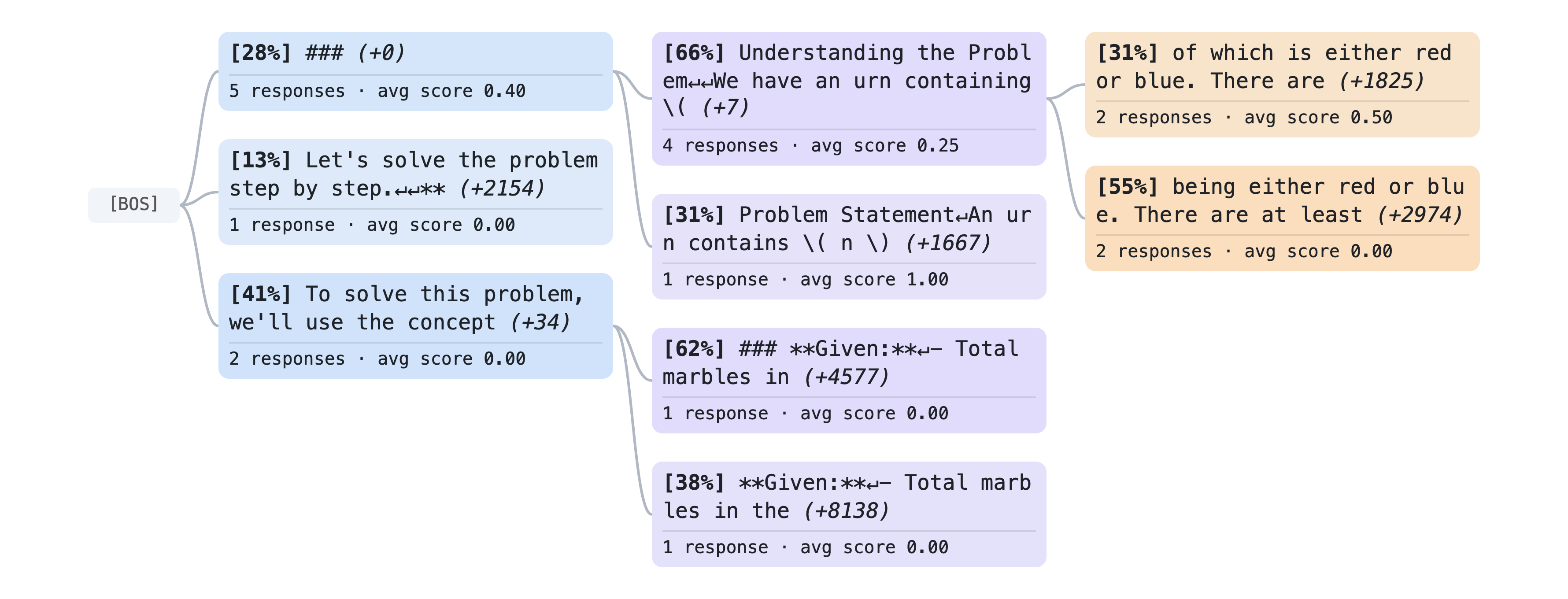}
        \caption{\ours{}}
        \label{fig:rollout-poly_aime26}
    \end{subfigure}
    \hfill
    \begin{subfigure}{0.49\textwidth}
        \centering
        \includegraphics[width=\linewidth]{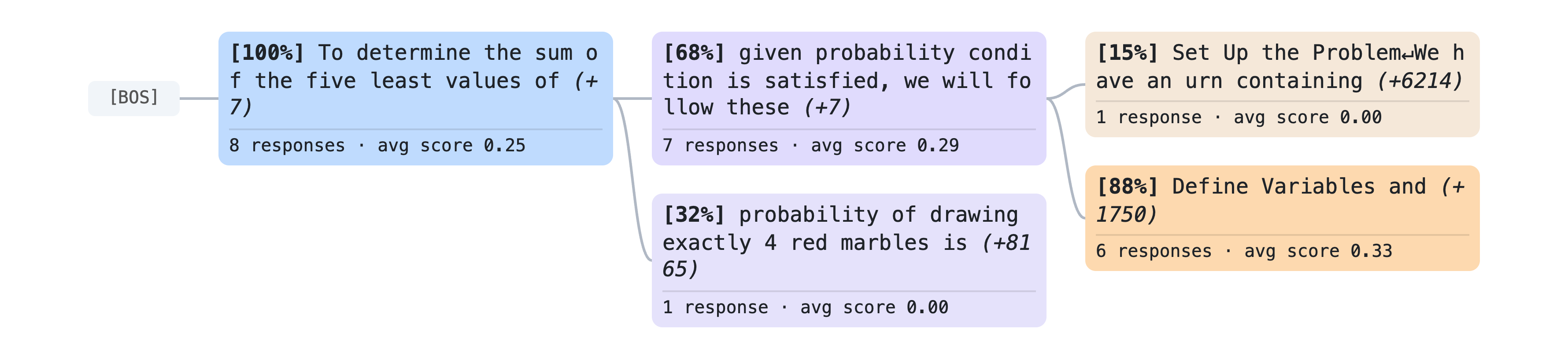}
        \caption{\grpo{}}
        \label{fig:rollout-grpo_aime26}
    \end{subfigure}

    \vspace{1.2em}

    \begin{subfigure}{0.31\textwidth}
        \centering
        \includegraphics[width=\linewidth]{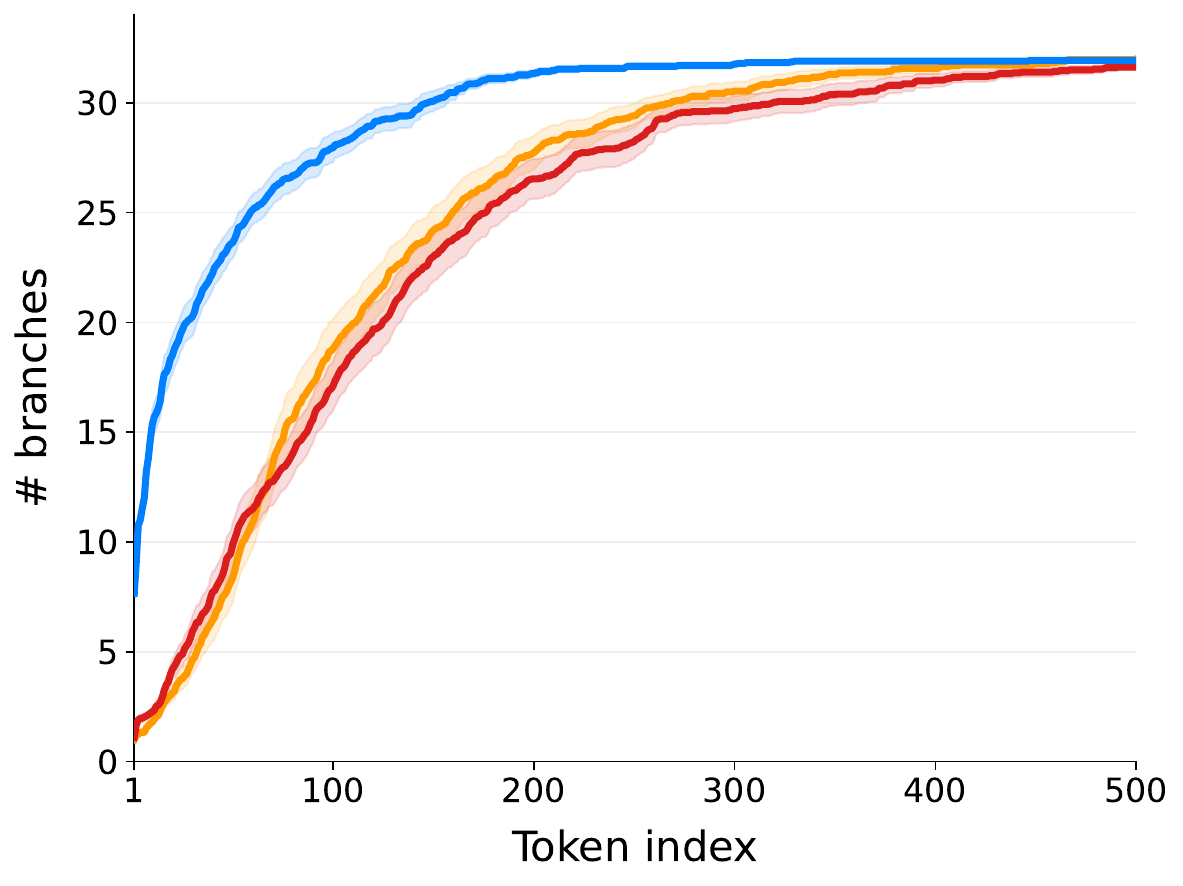}
        \caption{AIME 2026}
        \label{fig:branches-aime26}
    \end{subfigure}
    \hfill
    \begin{subfigure}{0.31\textwidth}
        \centering
        \includegraphics[width=\linewidth]{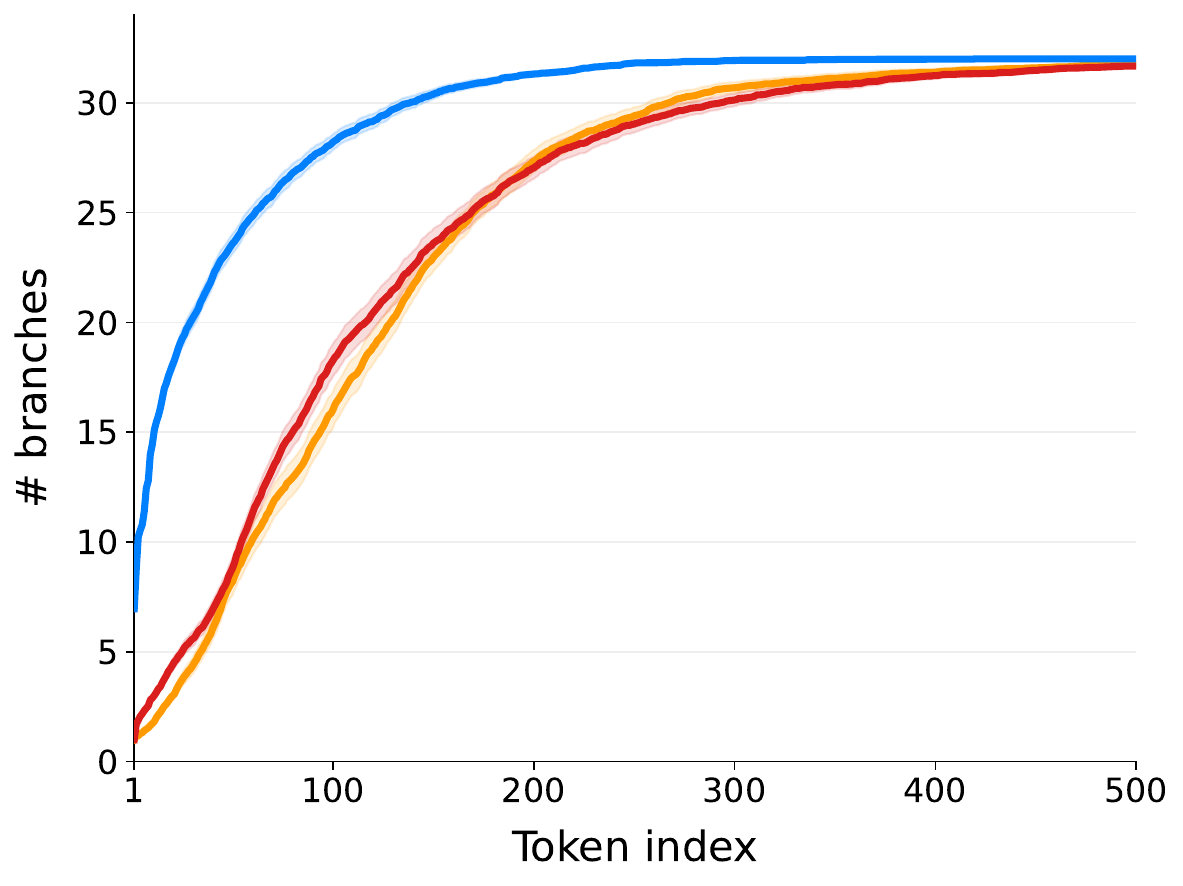}
        \caption{BeyondAIME}
        \label{fig:branches-beyondaime}
    \end{subfigure}
    \hfill
    \begin{subfigure}{0.31\textwidth}
        \centering
        \includegraphics[width=\linewidth]{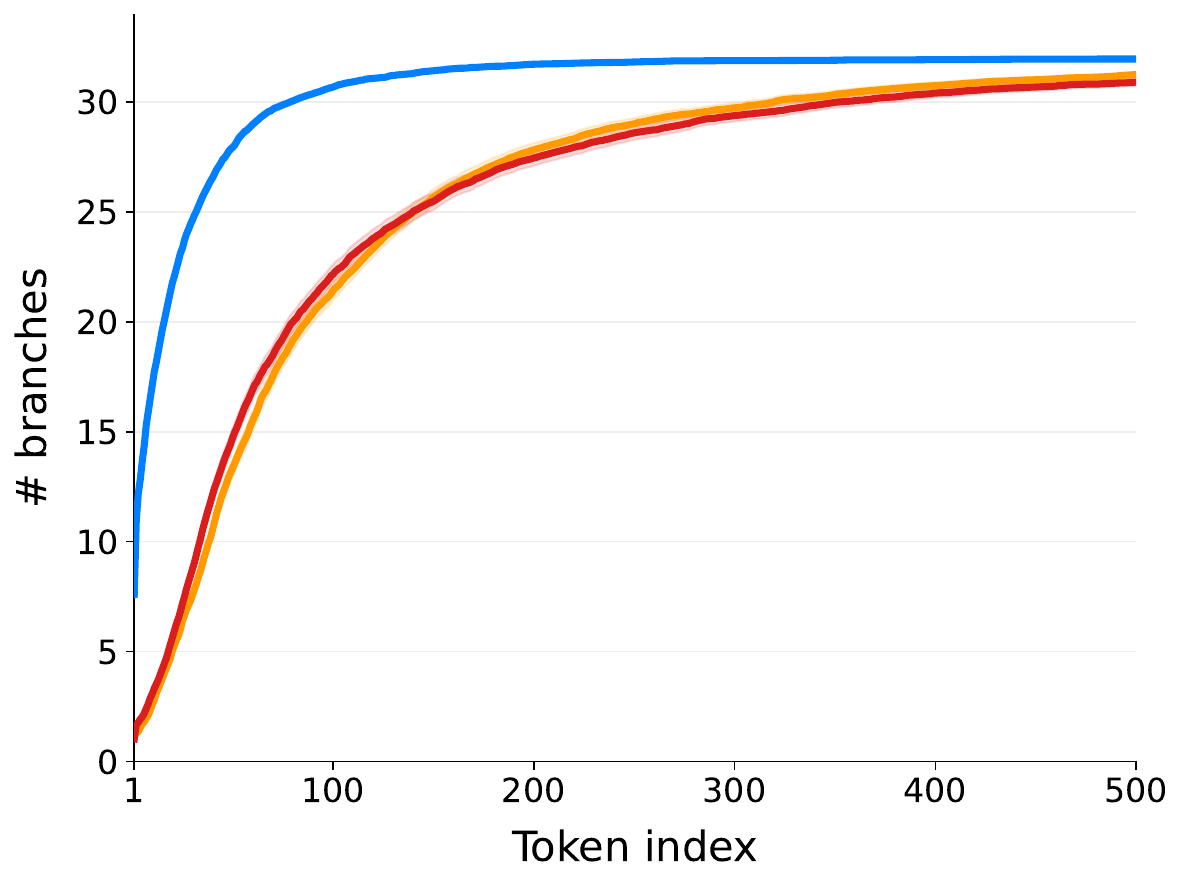}
        \caption{Minerva}
        \label{fig:branches-minerva}
    \end{subfigure}

    \vspace{0.6em}

    \includegraphics[width=0.6\textwidth]{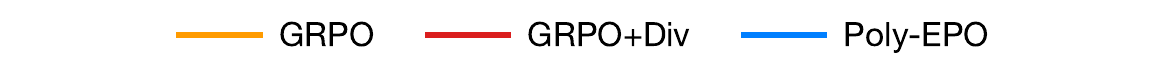}

    \caption{
    \textbf{\ours{} promotes broader branching during reasoning generation.}
    \textbf{Top:} Branching structure of 8 sampled rollouts on an AIME 2026 problem for \ours{} (left) and \grpo{} (right). Nodes represent shared-prefix clusters, and edges denote branching into distinct continuations.
    \textbf{Bottom:} Average number of active branches as a function of token position on AIME 2026, BeyondAIME, and Minerva. \ours{} consistently maintains more branches earlier in generation, indicating broader exploration over reasoning trajectories.
    }
    \label{fig:rollout_and_branching_behavior}
\end{figure*}

We next measure the fraction of prompts in each training batch for which the model samples at least one correct generation. As shown in \cref{fig: training_dynamics_maths_reasoning} (right), methods that explicitly encourage exploration achieve higher coverage: both \divrl{} and \ours{} outperform \grpo{} for much of training. This suggests that promoting exploration not only preserves strategic diversity, but also improves the probability of uncovering successful solutions during training. These two results, jointly, allow us to see that the polychromic objective does indeed enable the model to synergistically explore and exploit. 

\paragraph{Branching in Generations.}We further analyze the distinct behaviors induced by each post-training method by studying the diversity of model generations on the test set after fine-tuning with \grpo{} or \ours{}. In this analysis, we characterize diversity through the \textit{branching structure} of sampled generations. For each prompt \(x\) in the test set, we sample \(k\) generations \(y_1,\dots,y_k \sim \pi_\theta(\cdot \mid x)\) and organize them into a tree according to their shared token prefixes. Specifically, if two generations agree on their prefix through token \(h-1\) but differ at token \(h\), we say that they branch at token \(h\). Since all generations begin with the same beginning-of-sequence token (\texttt{[BOS]}), the tree has a single root.

In \cref{fig:rollout_and_branching_behavior}, we visualize example rollout trees and also quantify branching by measuring, at each token position, the number of active branches among sampled generations. We find that models trained with \ours{} branch substantially earlier in the generation process than models trained with \grpo{}. Although the \grpo{}-trained model eventually reaches a similar number of branches at later token positions, its generations remain coupled for much longer before diverging. This suggests that \ours{} encourages the model to differentiate its reasoning trajectories earlier, whereas \grpo{} tends to delay diversification until later in the rollout. Because early branching more naturally corresponds to committing to distinct reasoning strategies, these results indicate that \ours{} promotes broader strategic exploration at test time.

\paragraph{Majority Voting Evaluations.}Finally, we analyze whether \ours{} benefits from increased test-time compute. We evaluate using majority voting~\citep{wang2022self}, where \(k\) generations are sampled and the most frequent final answer is selected. The results are shown in \cref{fig:maj_baime_aime26_aime25}. We first observe that models trained with \ours{} retain substantially greater diversity at inference time: across benchmarks, the majority vote share under \ours{} is consistently lower than under both \grpo{} and \divrl{}, indicating that probability mass is distributed across a broader set of candidate answers rather than collapsing onto a single response.

Despite this higher entropy over final answers, \ours{} attains equal or stronger majority-vote accuracy as \(k\) increases, often yielding the largest gains from additional samples. This suggests that the extra diversity induced by the polychromic objective is not merely random variation, but reflects useful diversity in reasoning trajectories. We conjecture that exploration during post-training improves calibration and generalization, allowing additional test-time samples to be converted more effectively into performance gains.

\begin{figure}
    \centering    \includegraphics[width=1.0\linewidth]{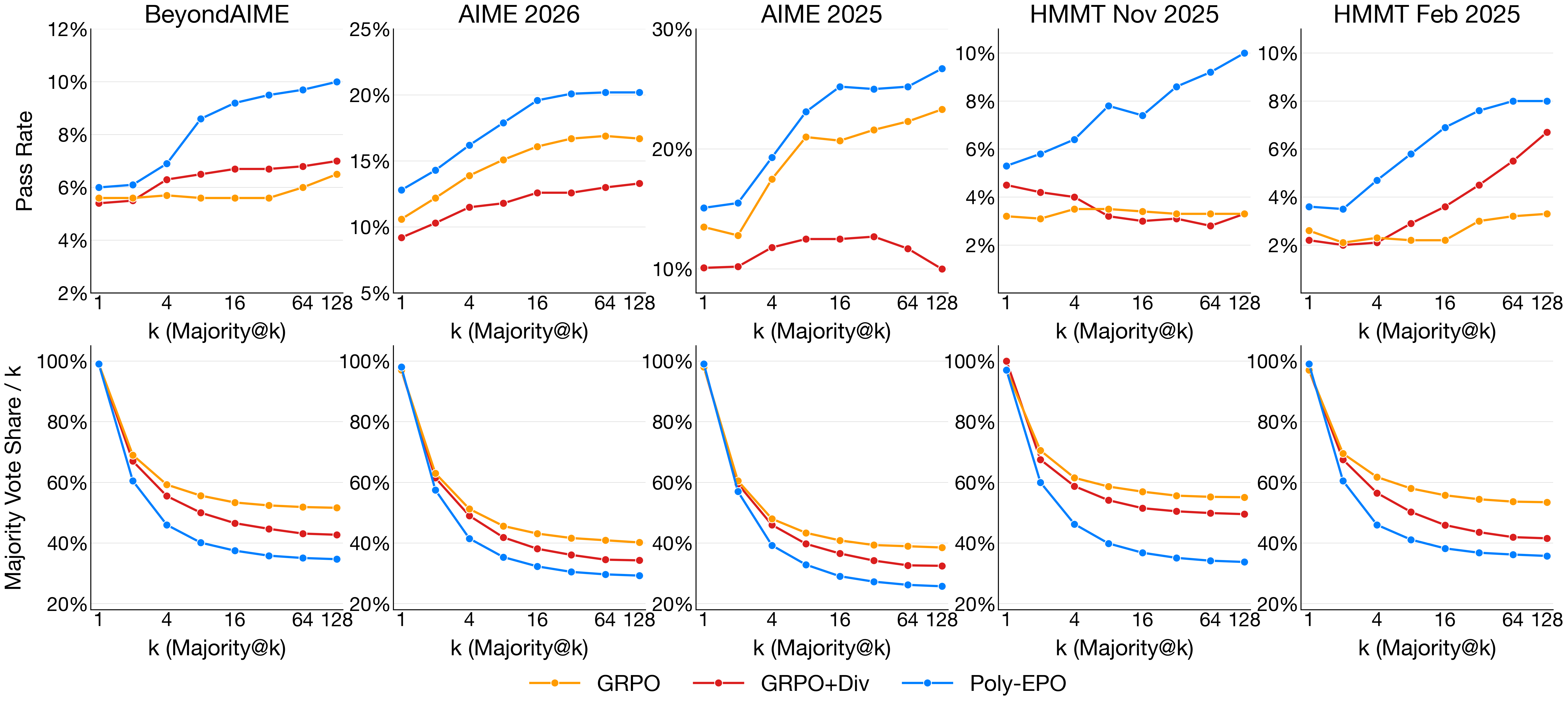}
    \caption{\textbf{Majority@$k$ evaluations on Test Sets}. The x-axis denotes the number of sampled generations $k$ used at test time. Top row: pass rate after selecting the answer returned by majority voting over the $k$ samples. Bottom row: majority vote share, i.e., the fraction of the $k$ votes assigned to the winning answer. Higher values indicate stronger agreement among sampled generations, while lower values indicate greater response diversity. All methods are initialized from \texttt{Qwen3-4B-Base}. The LM-judge used by \ours{} and \divrl{} for clustering responses is \texttt{Qwen-3-4B-Instruct}.}
    \label{fig:maj_baime_aime26_aime25}
\end{figure}

\subsection{Synthetic Domains with Infinitely Many Valid Strategies}
\label{subsec: experiments_infinite_strategy}

We now test our method in a controlled setting where each task admits infinitely many distinct strategies that can yield a correct answer. Our goal is to test whether \ours{} discovers a broader repertoire of successful strategies than \grpo{}. This probes a central property of exploration: an effective learner should not only achieve reward, but also discover diverse ways of achieving it through exploration. We consider two tasks in this section:

\begin{itemize}
    \item \textbf{Polynomial Solving.} In this task, the model is given a polynomial relation and must output a pair $(x,y)\in\mathbb{R}^2$ that satisfies it. For example, given $y = x^2 + 3x + 7$, one valid solution is $(x=1, y=11)$. 
    \item \textbf{Multi-digit Multiplication.} In this task, the model is asked to compute the product of two integers by decomposing the operands and distributing the multiplication. For example, to compute $343 \times 67$, one valid decomposition is
    \( (340 + 3)(70 - 3),\) which expands to
    \((340 \times 70) - (340 \times 3) + (3 \times 70) - (3 \times 3) = 22981.\)
\end{itemize}

For both tasks, we use \texttt{Qwen-3-1.7B-Base} as the base model and compare between \grpo{} and \ours{}. We use \texttt{Gemini-2.0-Flash} as the LM-judge for clustering generations when measuring diversity. For multi-digit multiplication, the judge clusters responses according to the strategy used; for example, a solution based on factorization is assigned to a different cluster than one based on partial products over place values. For polynomial solving, diversity is defined directly over the final answer: two outputs $(x_1,y_1)$ and $(x_2,y_2)$ are assigned to different clusters whenever $x_1 \neq x_2$ or $y_1 \neq y_2$.

\cref{fig:toy_exp_training} shows the training dynamics of the models, including how the diversity of generations evolves over training. In particular, we measure the number of distinct clusters among correct responses for each prompt, averaged over prompts in a batch, to quantify the diversity of successful strategies learned by the policy. We also measure the number of distinct clusters among incorrect responses to quantify the diversity of incorrect but exploratory generations produced by the policy.

We find that \grpo{} quickly collapses to a single successful strategy. In contrast, \ours{} continues to explore and discovers more than five times as many distinct successful strategies. Furthermore, \ours{} preserves higher diversity over incorrect generations, suggesting that when exploring optimistically, it has a larger coverage over strategies that are being attempted. These results indicate that \ours{} is substantially more effective at maintaining exploration and uncovering novel high-reward strategies in domains with many valid solutions.

\begin{figure}[H]
\centering

\makebox[\textwidth]{%
\begin{minipage}{0.46\textwidth}
    \centering
    \begin{subfigure}{0.49\linewidth}
        \centering
        \includegraphics[width=\linewidth]{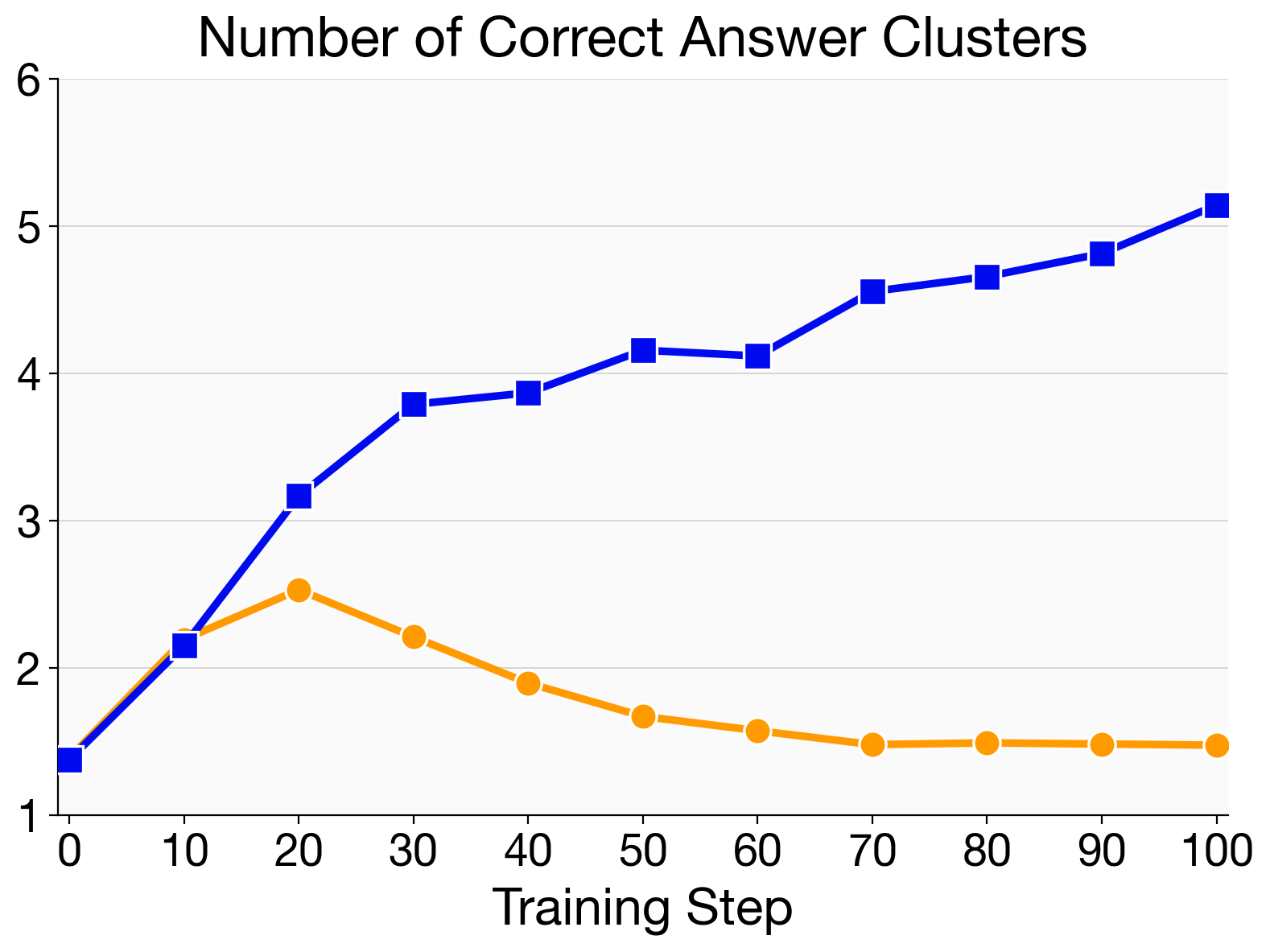}
    \end{subfigure}
    \hfill
    \begin{subfigure}{0.49\linewidth}
        \centering
        \includegraphics[width=\linewidth]{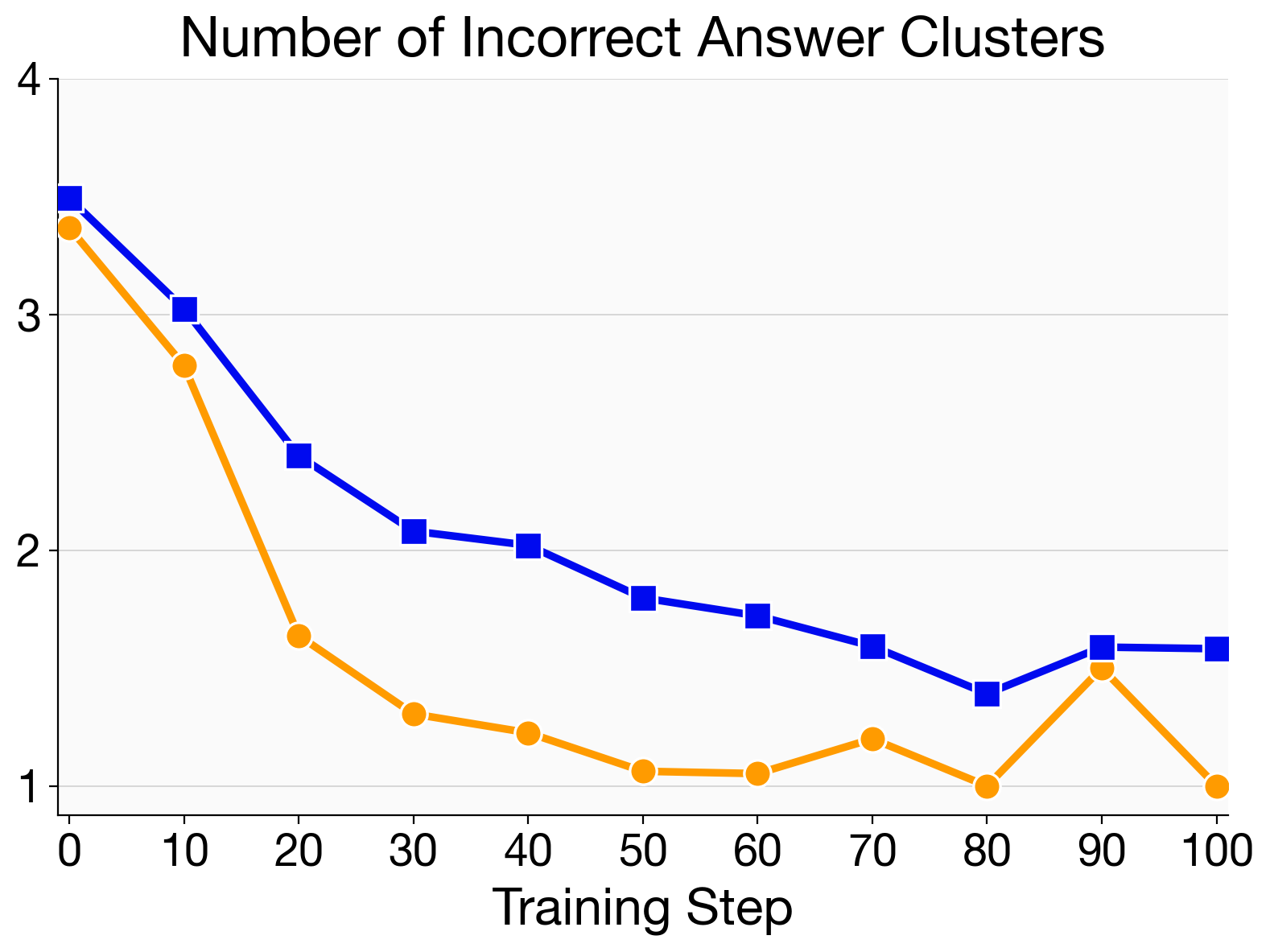}
    \end{subfigure}

    \vspace{0.7em}

    {\small Multi-digit Multiplication}
\end{minipage}
\hspace{0.02\textwidth}
\vrule width 0.7pt
\hspace{0.02\textwidth}
\begin{minipage}{0.46\textwidth}
    \centering
    \begin{subfigure}{0.49\linewidth}
        \centering
        \includegraphics[width=\linewidth]{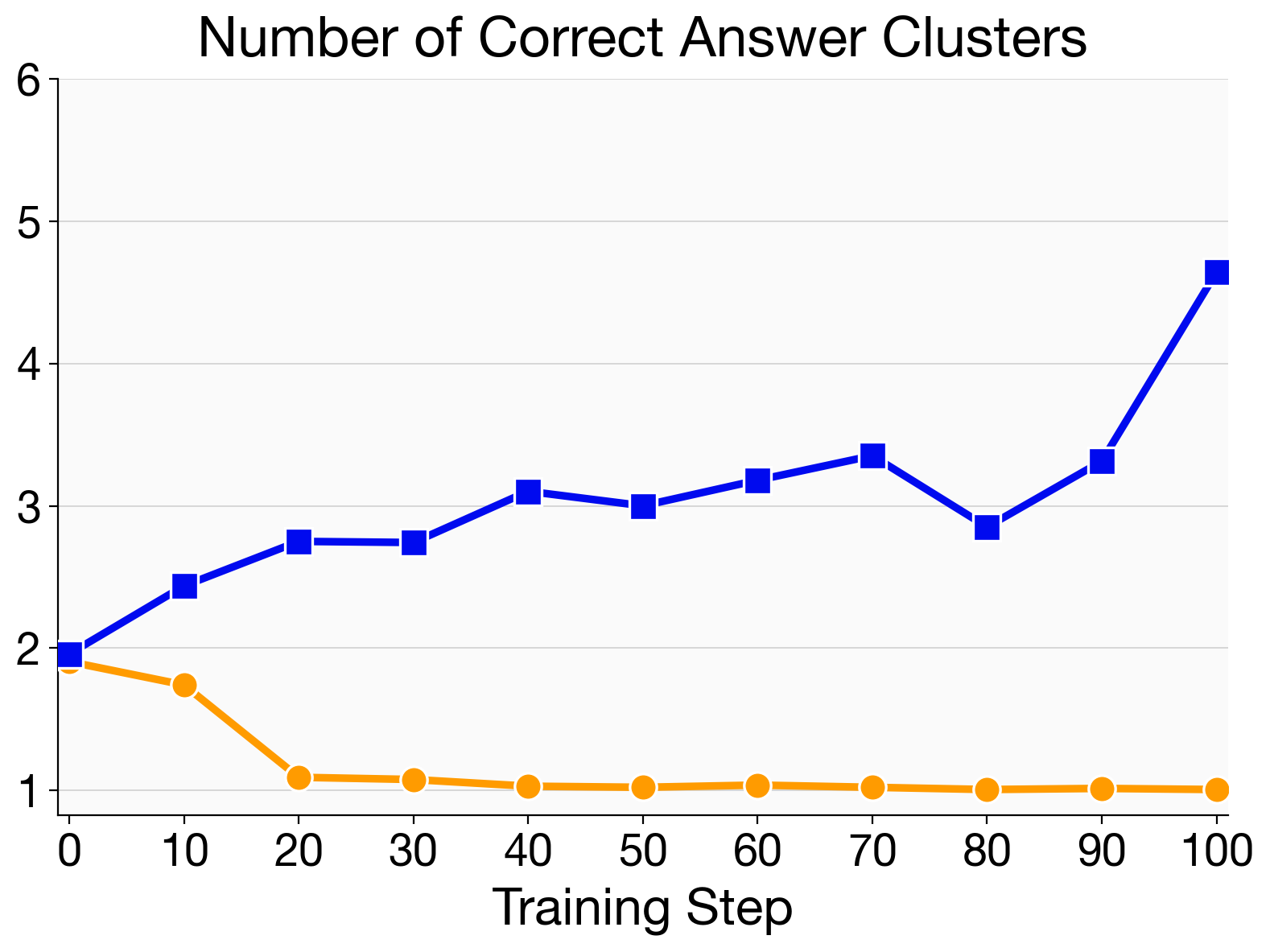}
    \end{subfigure}
    \hfill
    \begin{subfigure}{0.49\linewidth}
        \centering
        \includegraphics[width=\linewidth]{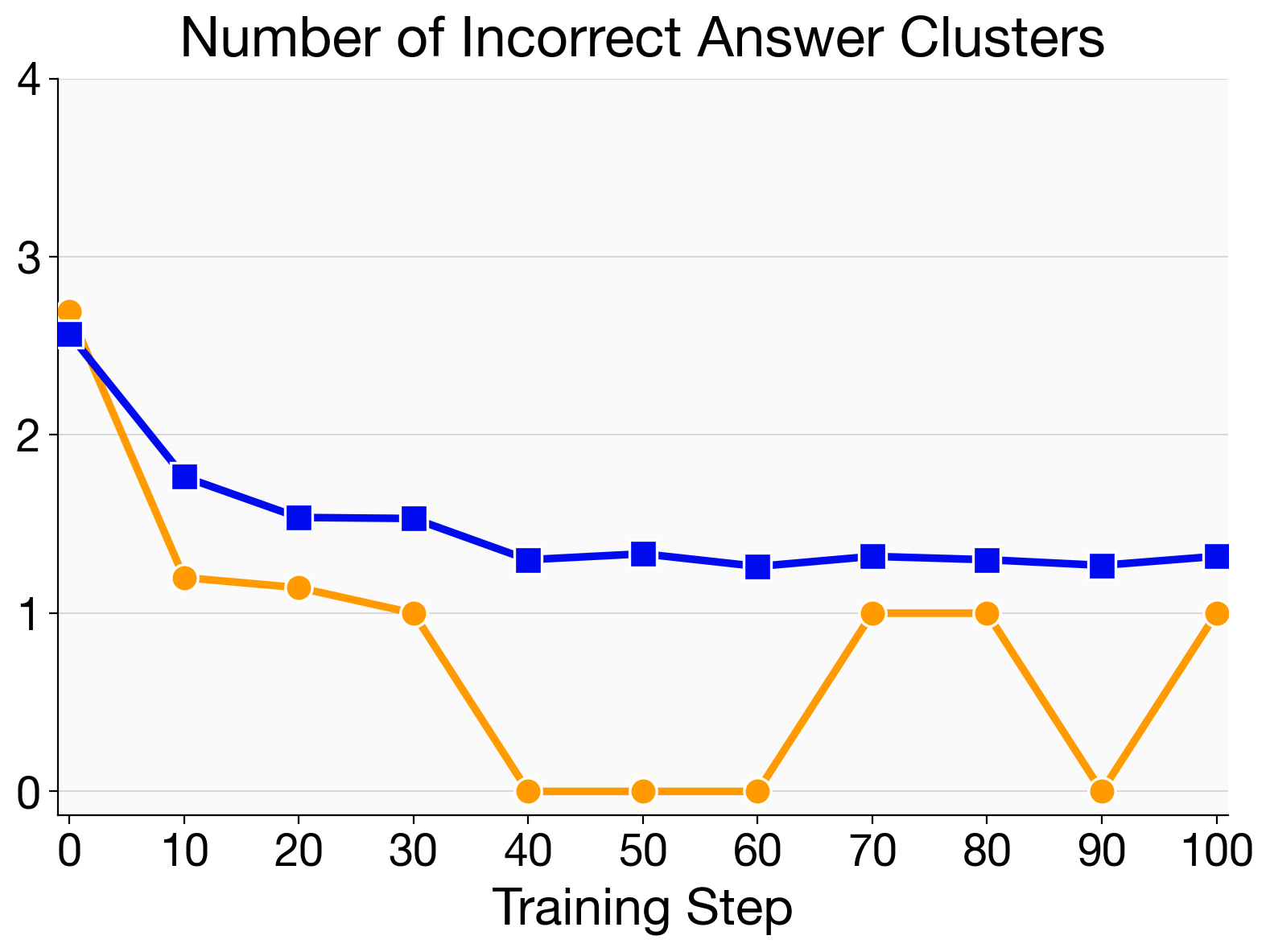}
    \end{subfigure}

    \vspace{0.7em}

    {\small Polynomial Solving}
\end{minipage}
}

\vspace{1em}

\includegraphics[width=0.40\textwidth]{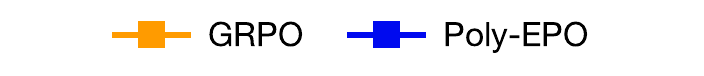}

\caption{
\textbf{Training dynamics on synthetic domains}. The two left plots correspond to \textbf{multi-digit multiplication}, and the two right plots correspond to \textbf{polynomial solving}. We measure strategy diversity by clustering generations and reporting the number of distinct clusters among correct and incorrect responses for each prompt, averaged across the batch. Relative to \grpo{}, \ours{} maintains substantially greater diversity throughout training and discovers many more successful strategies.
}
\label{fig:toy_exp_training}
\end{figure}

\section{Related Work}

\paragraph{Policy Gradient Methods.} Policy gradient methods \citep{10.5555/3009657.3009806, 10.5555/3312046, NIPS2001_4b86abe4} are a foundational class of reinforcement learning algorithms that directly optimize a policy to maximize expected rewards. Recent advances in reducing variance and improving sample efficiency \citep{BHATNAGAR20092471, degris2013offpolicyactorcritic, DBLP:journals/corr/LillicrapHPHETS15, wang2017sampleefficientactorcriticexperience, schulman2017trustregionpolicyoptimization, schulman2017proximalpolicyoptimizationalgorithms} have made them widely adopted for language model (LM) fine-tuning. In the LM setting, these methods typically omit a learned critic in favor of empirical advantage estimates \citep{Guo_2025, yu2025dapoopensourcellmreinforcement, zheng2025groupsequencepolicyoptimization, liu2025understandingr1zeroliketrainingcritical, minimax2025minimaxm1scalingtesttimecompute}. Concurrently, \cite{hamid2026polychromicobjectivesreinforcementlearning} introduced the set reinforcement learning framework, which optimizes an agent with respect to a set-level learning signal. Our work bridges and complements both lines of research: the general set RL recipe we propose can seamlessly adapt standard RL algorithms by replacing the advantage with the marginal set advantage to implement set RL at scale.

Furthermore, as test-time compute has become a standard paradigm for performance gains, recent research has explored training objectives aligned with inference-time metrics \citep{tang2025optimizinglanguagemodelsinference, chow2025inferenceawarefinetuningbestofnsampling, chen2025passktrainingadaptivelybalancing, walder2025passkpolicyoptimizationsolving}. While some of these frameworks also optimize over a number of generations for the specific instance of pass@$k$ training \citep{walder2025passkpolicyoptimizationsolving, tang2025optimizinglanguagemodelsinference}, they rely on a leave-one-out baseline. This causes the objective to collapse to individualized credit assignment that set RL, by definition, does not allow~\citep{hamid2026polychromicobjectivesreinforcementlearning}. Crucially, if one were to strictly optimize the pass@$k$ objective using the set RL framework, generations with incorrect final answers would always receive a negative advantage, as we showed in \S\ref{subsec: analysis_of_general_recipe}. On the other hand, \cite{chen2025passktrainingadaptivelybalancing} propose an algorithm for pass@$k$ training that also computes a U-statistic estimator. However, their advantage is not the same as the set advantage since they divide by the standard deviation in the set scores, which induces a significant deviation from our general set RL recipe. Additionally, they analytically compute the advantage of each generation under the pass@$k$ training framework, which may not be possible for arbitrary objectives. 

\paragraph{Exploration and Diversity.} Balancing exploration and exploitation remains a critical challenge during LM post-training. While some works argue that RL fine-tuning acts primarily as an entropy-reducing mechanism that does not surface new capabilities \citep{yue2025doesreinforcementlearningreally, cui2025entropymechanismreinforcementlearning, wu2026invisibleleashrlvrescape, zhao2025echochamberrlposttraining}, others have demonstrated that RL fine-tuning can successfully expand reasoning capabilities to held-out tasks \citep{liu2025prorlprolongedreinforcementlearning, research2026composer2technicalreport}. Our work is simply motivated by the principle that effective exploration amplifies learning from experience. Several modifications to the RL pipeline have been proposed to achieve such exploration. Standard approaches like entropy bonuses \citep{haarnoja2017reinforcementlearningdeepenergybased, haarnoja2018softactorcriticoffpolicymaximum, schulman2017proximalpolicyoptimizationalgorithms, seo2021stateentropymaximizationrandom, islam2019marginalizedstatedistributionentropy, zheng2025returnentropyelicitingexplore} encourage local exploration, though their efficacy in inducing trajectory-level exploration over high-level strategies during LM post-training remains unclear. Similarly, UCB and count-based bonuses \citep{song2025outcomebasedexplorationllmreasoning, zhang2025countcountsmotivatingexploration, tuyls2025representationbasedexplorationlanguagemodels} can incentivize novel behaviors, but they rely on weighting hyperparameters rather than explicitly driving a synergy between exploration and exploitation. Other methods implicitly induce exploration through curiosity-driven techniques \citep{dai2025cdecuriositydrivenexplorationefficient, gao2026navigateunknownenhancingllm} or novel objectives independent of diversity measures \citep{tajwar2026maximumlikelihoodreinforcementlearning, chen2025passktrainingadaptivelybalancing, walder2025passkpolicyoptimizationsolving, cui2025entropymechanismreinforcementlearning, yu2025dapoopensourcellmreinforcement, gxchen2025klregularizedreinforcementlearningdesigned}.

More closely related to our approach are works that promote exploration via objectives targeting the semantic diversity of generations \citep{li2025jointlyreinforcingdiversityquality, yao2025diversityawarepolicyoptimizationlarge, hu2026rewardingrareuniquenessawarerl}. However, while these distribute probability mass across multiple \textit{correct} generations, they do not explicitly encourage optimistic exploration by providing a positive learning signal to exploratory trajectories with incorrect final answers. On the other hand, \cite{song2025outcomebasedexplorationllmreasoning, tuyls2025representationbasedexplorationlanguagemodels} use the standard RL framework with an exploration bonus that can encourage such optimistic exploration but the balance between exploration and exploitation is dictated by a weighting hyperparameter and the objective does not lead to an advantage function that explicitly encourages a synergy between these goals. In contrast, the set reinforcement learning framework with polychromic objectives \citep{hamid2026polychromicobjectivesreinforcementlearning} achieves both optimistic exploration and synergistic exploitation. Yet, their algorithm requires a learned critic and relies on computationally heavy vine sampling, making it difficult to scale for LM post-training. By resolving these sample and computational complexities, our approach allows LMs to leverage the full benefits of polychromic objectives.

\section{Conclusion}
In this paper, we presented \oursfull{} (\ours{}), a set reinforcement learning algorithm that trains a policy to explore diverse reasoning strategies optimistically while balancing exploration and exploitation through an explicit synergy between them. The optimistic nature of exploration is reflected in the fact that the algorithm can upweight generations that pursue markedly different strategies even when they do not produce correct answers. This synergy is encoded directly in the advantage function, which depends on the covariance between average reward and diversity across generations in a set. Our results suggest that \ours{} enables the model to explore diverse reasoning pathways that enables it to unlock performance at test-time especially when given multiple attempts per problem. 
We close by noting several limitations of this work. First, as with many methods that use LM-judges to provide learning signals in reinforcement learning, our approach may be susceptible to reward hacking when an LM-judge is used to cluster responses according to high-level strategy. Second, if response lengths grow substantially, clustering all generations for a prompt in a single judge call may place significant demands on long-context reasoning, which remains challenging for many LM-judges. Third, the scaling laws of our general recipe for set reinforcement learning are still not well understood, especially regarding the relative roles of set size, number of rollouts, and number of constructed sets. Future work should study these questions more systematically. Furthermore, alternative forms of measuring diversity in generations, including ones that do not incorporate an LM-judge should be studied. Future works should also compare this approach to alternative ways of encouraging exploration, such as training on synthetic data or more implicit training objectives.

\section{Acknowledgments}

We thank Michael Li, Suvir Mirchandani, Jensen Gao, Omar Shaikh, and Joey Hejna for their feedback throughout the project and reviews on early drafts of the paper; these comments were crucial in developing and finalizing both the ideas and the implementations in the project. We thank Fahim Tajwar for open-sourcing the excellent codebase used in \cite{tajwar2026maximumlikelihoodreinforcementlearning} which is what we used for this project and for giving us helpful advice on the experiments we conducted. We also thank Yoonho Lee for providing valuable insights, especially on the practical use cases and limitations of search methods. JIH would like to dedicate this work to Juno---a great friend, and a source of endless inspiration and happiness to his life. 

This work is supported by Schmidt Sciences; ONR grants N00014-22-1-2621 and N00014-22-1-2293; NSF projects 1941722 and 2125511; an award from DARPA ExpMath; and Google DeepMind, which provided Google Cloud Platform credits.

\printbibliography
\newpage

\appendix

\section{Implementation Details}
\label{appendix: implementation-details}

In this section, we discuss the implementation details of \ours{}. Our implementation builds upon the codebase from \cite{tajwar2026maximumlikelihoodreinforcementlearning}, which is in turn based on the Verl framework \citep{sheng2024hybridflow}. Recall that our general recipe for set RL simply adapts the advantage function used in a standard RL algorithm. As our backbone, we use the following standard RL objective~\citep{schulman2017proximalpolicyoptimizationalgorithms, shao2024deepseekmathpushinglimitsmathematical, Guo_2025, liu2025understandingr1zeroliketrainingcritical}:

$$J(\theta) = \mathbb{E}_{x \sim \mathcal{D}, y_{1:N} \sim \pi_{\theta_\mathrm{old}}(\cdot \mid x)}\left[ \frac{1}{N}\sum_{i=1}^N \frac{1}{T_i} \sum_{t=1}^{|y_i|} \min \left[\omega_{i,t}\hat{A}_{i,t}, \mathrm{clip}\left( \omega_{i,t}, 1 - \epsilon_\mathrm{low} , 1 + \epsilon_\mathrm{high}\right)\hat{A}_{i,t} \right]\right],$$

where $\omega_{i,t} = \frac{\pi_\theta(y_{i,t} \mid x, y_{i,<t})}{\pi_{\theta_\mathrm{old}}(y_{i,t} \mid x, y_{i,<t})}$. 

For the \grpo{} baseline, $T_i = |y_i|$ to ensure the objective is averaged over the length of each individual generation. The advantage estimator is defined as:
$$\hat{A}_{i, t} = r(x, y_i) - \mathrm{mean}(\{r(x, y_1), \dots, r(x,y_N)\}).$$
Note that we omit the standard deviation normalization term originally used in \cite{shao2024deepseekmathpushinglimitsmathematical}, as we found this choice led to better empirical performance—a modification consistent with recent findings in \textsc{Dr.GRPO}~\citep{liu2025understandingr1zeroliketrainingcritical} and other works, such as \cite{research2026composer2technicalreport}.

For the \divrl{} baseline, the advantage estimator incorporates a diversity penalty and is computed as:
$$\hat{A}_{i,t} = r(x, y_i) + \lambda d(x, y_i) - \mathrm{mean}\left(\{ r(x, y_j) + \lambda d(x, y_j) \}_{j=1}^N \right).$$

Finally, for \ours{}, we do not normalize by the individual generation lengths. Instead, following the \textsc{Dr.GRPO} implementation in Verl, we set $T_i = T_\mathrm{max}$, where $T_\mathrm{max}$ is the maximum response length. The advantage is computed as:
$$\hat{A}_{i,t} = \widehat{A^\sharp_\mathrm{marg}}(x, y_i)$$
as detailed in \cref{sec: set_rl_for_LMs}. Note that although we proved, in Proposition \ref{prop:set_rl_u_stat_unbiased}, that our estimator is unbiased only after scaling by a constant factor that can be absorbed into the learning rate, we do not introduce this factor in implementation and do not consider this to be an additional hyperparameter that needs tuning.

\subsection{LM-Judge and Diversity Function}
\label{appendix: lm-judge_and_diversity_function}

For a given prompt $x$, we sample $N$ candidate responses $y_1,\dots,y_N \overset{\mathrm{i.i.d}}{\sim} \pi_\theta(\cdot \mid x)$ and use a language model as judge to cluster these generations. The primary objective of this clustering phase is to group responses strictly according to the underlying mathematical reasoning strategies employed, rather than surface-level textual similarities. To systematically guide the judge, we provide a comprehensive system prompt comprising task specifications, rigorous clustering criteria, formatting constraints, and curated in-context examples. The core directives are defined as follows:

\begin{enumerate}
    \item \textbf{Strategy-centric clustering:} The judge must categorize responses based on both macro-strategy (the overarching conceptual framework) and micro-strategy (the specific techniques used at key intermediate steps), deliberately disregarding stylistic variations or tone. Crucially, this clustering process is completely independent of the final derived answer. If two response trajectories exhibit identical logical reasoning but diverge due to a minor arithmetic error, they must be assigned to the same cluster. The provided few-shot examples explicitly demonstrate this invariant.
    \item \textbf{Isolation of degenerate responses (Cluster 100):} We reserve a dedicated, out-of-bounds index (``cluster 100'') for degenerate generations. Empirically, we observe that during optimization, the policy network may sometimes discover reward-hacking behaviors, such as directly predicting the final answer without executing intermediate logical steps or generating unintelligible text. The judge is explicitly instructed to isolate all such spurious generations into this dedicated ``cluster 100'' bucket. Because the number of generations per prompt, $N$, is strictly less than 100, this fixed numeric identifier prevents index collisions and facilitates the systematic tracking of undesirable model behaviors throughout training.
\end{enumerate}

The full prompt supplied to the LM-judge consists of: (i) a static instruction block detailing the task, rules, and few-shot examples, and (ii) an instance-specific suffix containing the problem context alongside the sampled responses.

\begin{tcolorbox}[
    colback=gray!5,
    colframe=gray!60,
    title=Instruction Block for LM-Based Clustering of Reasoning Strategies,
    breakable
]
{\footnotesize
\begin{lstlisting}[style=promptstyle]
Your ONLY task is to cluster the {n_responses} responses into buckets based on their reasoning algorithm, including both the overall strategy and the methods used at key intermediate steps.

**INPUT FORMAT:** You will receive:
1) A "Context" describing the task.
2) A numbered list of Responses from 1 to {n_responses}. Each response contains a reasoning process and final answer. 
Note: Responses may or may not explicitly state their strategy; you must infer the strategy by analyzing the mathematical steps taken.

**CLUSTERING CRITERIA:**
(1) Macro-strategy: The overall conceptual framework (e.g., recursion vs infinite series; prime factorization vs gcd-based formula).
(2) Micro-strategy: The specific method used to resolve key intermediate steps. Examples include: how absolute values are removed (+- case split vs squaring), how intervals are partitioned, or how a basis is chosen.

**CLUSTERING RULES:** - Cluster strictly based on logic and approach. NOT on wording, tone, formatting, or final answer.  
- Two responses share a cluster_id IF AND ONLY IF they use the same macro-strategy AND the same micro-strategy at every key step.
- Arithmetic errors do NOT create new clusters if the underlying logic is identical.
- **SPECIAL CLUSTER 100:** You MUST assign `cluster_id: 100` to any response that is:
    * Gibberish (random characters, nonsense strings).
    * Irrelevant to the math problem (off-topic text).
    * Non-mathematical reasoning (e.g., writing code to solve it instead of math, or making a random guess at the final answer without logical steps).

**OUTPUT RULES (STRICT):** 1. Respond ONLY with a JSON object. No text outside the JSON.
2. The JSON must contain exactly {n_responses} keys: "1", "2", ..., "{n_responses}".  
3. The value for each key must be:  
    "chain_of_thought": "Macro: [short description]. Micro: [short description]."
    "cluster_id": integer.
4. chain_of_thought must be concise and avoid repeating the actual calculations.

**Few-Shot Example 1:**

**Context:**
What is the smallest value of x such that |5x - 1| = |3x + 2|?

**Responses:**
1. We can split this into two cases: 5x - 1 = 3x + 2 or 5x - 1 = -(3x + 2). Solving the first gives 2x = 3 so x = 1.5. The second gives 8x = -1 so x = -0.125.
2. The expression 5x-1 changes sign at 1/5, and 3x+2 changes at -2/3. For x < -2/3, we have -(5x-1) = -(3x+2). For -2/3 < x < 1/5, we have -(5x-1) = 3x+2. Solve -(5x - 1) = 3x + 2 for the range, yielding x = -0.125.
3. Using the property that |a|=|b| implies a=b or a=-b, we get 5x-1 = 3x+2 (x=1.5) and 5x-1 = -3x-2 (x=-0.125). So the answer is x = -0.125
4. To get rid of the absolute values, square both sides: (5x - 1)^2 = (3x + 2)^2. This expands to 25x^2 - 10x + 1 = 9x^2 + 12x + 4. Solve 16x^2 - 22x - 3 = 0. So, x = -1/8, 3/2. Final answer is x = -1/8
5. Either 5x - 1 = 3x + 2 or 5x - 1 = -3x - 2. This leads to x = 3/2 and x = 0. So, final answer is x = 0.
6. I think the answer is probably 0 or maybe 1.5. 
7. asdf qwer zxcv 9999 ---- ??? Let's write Python to check each x from -10 to 10: `if abs(5*x-1) == abs(3*x+2): print(x)`. The answer is -0.125. 

**Expected Output:**
{{
"1": {{"chain_of_thought": "Macro: Algebraic casework. Micro: Direct +- case split to remove absolute values.", "cluster_id": 1}},
"2": {{"chain_of_thought": "Macro: Interval analysis. Micro: Testing expression sign changes across number line partitions.", "cluster_id": 2}},
"3": {{"chain_of_thought": "Macro: Algebraic casework. Micro: Direct +- case split to remove absolute values.", "cluster_id": 1}},
"4": {{"chain_of_thought": "Macro: Algebraic transformation. Micro: Squaring both sides to create and solve a quadratic equation.", "cluster_id": 3}},
"5": {{"chain_of_thought": "Macro: Algebraic casework. Micro: Direct +- case split to remove absolute values (contains arithmetic error).", "cluster_id": 1}},
"6": {{"chain_of_thought": "Macro: Non-mathematical. Micro: Random guessing without any logical derivation.", "cluster_id": 100}},
"7": {{"chain_of_thought": "Macro: Gibberish/Non-mathematical. Micro: Contains random non-English text, nonsense strings, and code-based iteration.", "cluster_id": 100}}
}}

**Few-Shot Example 2:**

**Context:**
What is the least common multiple of 72 and 96?

**Responses:**
1. 72 = 2^3 * 3^2. 96 = 2^5 * 3^1. To find the LCM, we take the highest power of each prime factor present: 2^5 * 3^2 = 32 * 9 = 288.
2. Prime factors of 72: 2, 2, 2, 3, 3. Prime factors of 96: 2, 2, 2, 2, 2, 3. The union of these sets is five 2s and two 3s. Total: 276.
3. First find the GCD using the Euclidean algorithm: 96 = 72(1) + 24; 72 = 24(3) + 0. GCD is 24. LCM is (72 * 96) / 24.
4. 72 = 8*9, 96 = 8*12. The answer is 288. The answer is 288. The answer is 288. The answer is 288.

**Expected Output:**
{{
"1": {{"chain_of_thought": "Macro: Prime factorization analysis. Micro: LCM via maximum exponents of prime factors.", "cluster_id": 1}},
"2": {{"chain_of_thought": "Macro: Prime factorization analysis. Micro: LCM via maximum exponents of prime factors (contains arithmetic error).", "cluster_id": 1}},
"3": {{"chain_of_thought": "Macro: Product-GCD relationship. Micro: GCD calculation via Euclidean algorithm followed by the LCM formula.", "cluster_id": 2}},
"4": {{"chain_of_thought": "Macro: Excessive repetition. Micro: Response loops the final answer multiple times at the end.", "cluster_id": 100}}
}}
\end{lstlisting}
}
\end{tcolorbox}

\vspace{0.5em}
\noindent\textbf{Instance-Specific Suffix.}
For a given problem instance, we dynamically construct the prompt suffix by populating the template below with the actual mathematical context and the sampled responses:

\begin{tcolorbox}[
    colback=gray!5,
    colframe=gray!60,
    title=Instance-Specific Prompt Suffix,
    breakable
]
{\footnotesize
\begin{lstlisting}[style=promptstyle]
**Context:**
<problem description>

**Responses:**
1. <response 1>
2. <response 2>
...
{n_responses}. <response {n_responses}>
\end{lstlisting}
}
\end{tcolorbox}

During the evaluation phase, the static instruction block and the dynamically instantiated suffix are concatenated into a unified user message and passed to the LM-judge for inference. The LM-judge is used to cluster all $N$ responses, $y_1,\cdots,y_N$, sampled conditioned on a prompt. This gives us a cluster assignment, $\mathcal{C}(y)$, for each generation $y$. In order to prevent the model from artificially inducing diversity by generating random guesses at the final answer or by generating unintelligible text, we remove the cluster assignments for all such generations when computing the diversity of any set. In other words, when computing the diversity of any set using \cref{eq: diversity_function}, we remove any cluster assignment to ``cluster 100'' from being in the set in the numerator.

\subsection{Hyperparameters}

The training hyperparameters for the experiments on mathematical reasoning (\S\ref{subsec: experiments_maths_reasoning}) are provided in \cref{tab: training_hyperparameters}. 

\begin{table}[H]
\centering
\begin{tabular}{ll}
\toprule
\textbf{Parameter} & \textbf{Value} \\
\midrule
Base model & \texttt{Qwen-3-4b-base} \\
Generations per prompt & 8 \\
Set size (for set RL) & 4 \\
Number of sets (for set RL) & 70 \\
Max prompt length & 1024 \\
Learning rate & $1 \times 10^{-6}$ \\
KL coefficient & 0.0 \\
Clip ratio $\epsilon_\mathrm{low}$ & 0.20 \\
Clip ratio $\epsilon_\mathrm{high}$ & 0.28 \\
Entropy coefficient & 0.0 \\
Rollout temperature & 1.0 \\
Prompts per batch & 128 \\
Prompts per minibatch & 64 \\
Max response length & 4096 \\
Training steps & 850 \\
Device & 4 $\times$ NVIDIA H200 \\
\bottomrule
\end{tabular}
\caption{Training hyperparameters.}
\label{tab: training_hyperparameters}
\end{table}

\newpage

\section{Proofs}
\label{appendix: proofs}

First, we prove Proposition \ref{prop:set_rl_u_stat_unbiased}. 

{\noindent \textbf{Proposition 3.1 (Restated)} Fix a prompt $x$, and let $y_1,\cdots,y_N \overset{\mathrm{i.i.d.}}{\sim} \pi_\theta(\cdot \mid x)$ be our independently sampled $N$ generations and let $f : \mathcal{X} \times \mathcal{Y}^{\oplus n} \rightarrow \mathbb{R}$ be our set objective. Then, $$\mathbb{E}[
\sum_{i=1}^N
\nabla_\theta \log \pi_\theta(y_i \mid x)\,
\widehat{A_{\mathrm{marg}}^\sharp}(x,y_i; f)]
=
M  \nabla_\theta
\mathbb{E}_{y_{1:n} \sim \pi_\theta(\cdot \mid x)}
\bigl[f(x,y_{1:n})\bigr],$$ where $M \in \mathbb{R}_{>0}$ is a constant (and depends on the number of sets we construct). Consequently, after also taking expectation over $x \sim \mathcal{D}$ and scaling the learning rate, the estimator is an unbiased estimator of the set RL gradient.}

\begin{proof}
We first prove the case where we construct all $K = \binom{N}{n}$ sets from $y_{1:N}$ in a combinatorial fashion. Let $G_{1:K}$ be all the unique sets we construct and let $\mathcal{G}(y)$ be the collection of sets that contain the generation $y$. Starting from our estimator, we get:
\begin{align*}
    & \mathbb{E}\left[ \sum_{i=1}^N \nabla_\theta \log \pi_\theta(y_i \mid x) A^\sharp_\mathrm{marg}(x, y_i) \right] \\
    & = \mathbb{E}\left[ \sum_{i=1}^N \nabla_\theta \log \pi_\theta(y_i \mid x) \left( \sum_{G \in \mathcal{G}(y_i)} (f(x, G) - \frac{1}{K} \sum_{j=1}^K f(x, G_j) \right) \right] \\
    & = \mathbb{E}\left[ \sum_{G \in G_{1:K}} (f(x, G) -  \frac{1}{K}\sum_{j=1}^K f(x, G_j)) \sum_{y_i \in G} \nabla_\theta \log \pi_\theta(y_i \mid x) \right] \\
    & = \mathbb{E}\left[ \sum_{G \in G_{1:K}} f(x, G) \sum_{y_i \in G} \nabla_\theta \log \pi_\theta(y_i \mid x) \right] - \mathbb{E}\left[ \sum_{G \in G_{1:K}} \frac{1}{K}\sum_{j=1}^K f(x, G_j) \sum_{y_i \in G} \nabla_\theta \log \pi_\theta(y_i \mid x) \right] \\
    & = K \mathbb{E}\left[ f(x, G) \sum_{y_i \in G} \nabla_\theta \log \pi_\theta(y_i \mid x) \right] - \frac{1}{K} \mathbb{E}\left[ \sum_{j=1}^K f(x, G_j) \sum_{G \in G_{1:K}}  \sum_{y_i \in G} \nabla_\theta \log \pi_\theta(y_i \mid x) \right] \\
    & = K \mathbb{E}\left[ f(x, G) \sum_{y_i \in G} \nabla_\theta \log \pi_\theta(y_i \mid x) \right] - \frac{1}{K} \mathbb{E}\left[ \sum_{j=1}^K f(x, G_j)  \cdot \binom{N-1}{n-1} \sum_{i = 1}^N  \nabla_\theta \log \pi_\theta(y_i \mid x) \right] \\
    & = K \mathbb{E}\left[ f(x, G) \sum_{y_i \in G} \nabla_\theta \log \pi_\theta(y_i \mid x) \right] - \frac{1}{K} \binom{N-1}{n-1}  \mathbb{E}\left[ \sum_{j=1}^K f(x, G_j)  \cdot \sum_{i = 1}^N  \nabla_\theta \log \pi_\theta(y_i \mid x) \right] \\
    & = K \mathbb{E}\left[ f(x, G) \sum_{y_i \in G} \nabla_\theta \log \pi_\theta(y_i \mid x) \right] - \frac{1}{K} \binom{N-1}{n-1}  \sum_{j=1}^K \sum_{i=1}^N \mathbb{E}\left[ f(x, G_j)  \cdot \nabla_\theta \log \pi_\theta(y_i \mid x) \right] \\
\end{align*}

Now we expand the second term. We can write the sum over tokens $y_1,\cdots,y_N$ as: 

{\allowdisplaybreaks \begin{align*}
    & \frac{1}{K} \binom{N-1}{n-1} \sum_{j=1}^K \sum_{i=1}^N \mathbb{E}\left[ f(x, G_j)  \cdot \nabla_\theta \log \pi_\theta(y_i \mid x) \right] \\ 
    & = \frac{1}{K} \binom{N-1}{n-1} \sum_{j=1}^K \left( \mathbb{E}\left[ f(x, G_j)  \cdot \sum_{y_i \in G_j} \nabla_\theta \log \pi_\theta(y_i \mid x) \right] + \sum_{y_i \not \in G_j} \mathbb{E}\left[ f(x, G_j)  \cdot \nabla_\theta \log \pi_\theta(y_i \mid x) \right]\right) \\ 
    & = \binom{N-1}{n-1}  \mathbb{E}\left[ f(x, G)  \cdot \sum_{y_i \in G} \nabla_\theta \log \pi_\theta(y_i \mid x) \right] - 0 
\end{align*}}

Combining, we get that: 
{\allowdisplaybreaks \begin{align*}
    \mathbb{E}\left[ \sum_{i=1}^N \nabla_\theta \log \pi_\theta(y_i \mid x) A^\sharp_\mathrm{marg}(x, y_i) \right] = \left( \binom{N}{n} - \binom{N-1}{n-1} \right) \mathbb{E}\left[ f(x, G) \sum_{y_i \in G} \nabla_\theta \log \pi_\theta(y_i \mid x) \right]. 
\end{align*} } By Pascal's rule, the scaling factor $M = \binom{N}{n} - \binom{N-1}{n-1}$ is greater than 0 as long as $N > n$. 

The proof for the case where we construct $K$ sets by uniformly sampling (without replacement) is provided in the next proposition.

\end{proof}

\begin{prop}
\label{prop: normalized_sampled_marginal_advantage_unbiased}
Let $\mathcal S := \{S \subseteq \{1,\cdots,N \} : |S| = n\}$ which gives us a collection of all possible sets of size $n$ without replacement (more precisely, it gives us a collection of indices which defines the collection of all possible sets). Let $K_\mathrm{all} := |\mathcal S| = \binom{N}{n}$ which is the maximum number of sets of size $n$ one can construct from the $N$ generations, and let $d := \binom{N-1}{n-1}$. Let $\mathcal T \subseteq \mathcal S$ be a uniformly sampled subset of size $K$, sampled without replacement and independently of $y_{1:N}$. 

Define $ q_K := \Pr_{\mathcal T}\left(C_i(\mathcal T)>0\right) = 1-\frac{\binom{\binom{N-1}{n}}{K}} {\binom{\binom{N}{n}}{K}},$ where we use the convention that $\binom{a}{K}=0$ when $K>a$. Then
$$
\mathbb E_{y_{1:N},\mathcal T}
\left[
\sum_{i=1}^N
\nabla_\theta \log \pi_\theta(y_i \mid x)\,
A^\sharp_{\mathrm{marg}}(x,y_i \mid \mathcal T)
\right]
=
M_K
\nabla_\theta
\mathbb E_{y_{1:n}\sim \pi_\theta(\cdot \mid x)}
\left[
f(x,y_{1:n})
\right],$$
where
$$ M_K = \frac{N}{n}q_K-1.$$
In particular, if $N>n$ and $K>1$, then $M_K>0$. In other words, up to the positive scalar factor $M_K$, we have an unbiased estimator of the set-RL gradient.
\end{prop}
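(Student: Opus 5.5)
The plan is to condition on the random collection $\mathcal T$, compute the expectation over $y_{1:N}$ exactly, and only then average over $\mathcal T$. I read $A^\sharp_{\mathrm{marg}}(x,y_i\mid\mathcal T)$ as the normalized marginal advantage of \cref{eq: set_advantage_of_single_generation}, with the following notation.
\begin{itemize}
\item $C_i(\mathcal T):=|\{S\in\mathcal T: i\in S\}|$ counts the sampled sets that contain $y_i$.
\item $A^\sharp_{\mathrm{marg}}(x,y_i\mid\mathcal T):=\frac{\mathbf 1[C_i>0]}{C_i}\sum_{S\in\mathcal T,\,i\in S}\bigl(f_S-b\bigr)$, where $f_S:=f(x,(y_j)_{j\in S})$ and $b:=\frac1K\sum_{S'\in\mathcal T}f_{S'}$.
\item The advantage is set to $0$ when $C_i=0$.
\end{itemize}
Write $s_i:=\nabla_\theta\log\pi_\theta(y_i\mid x)$ and $F(\theta):=\mathbb E_{y_{1:n}}[f(x,y_{1:n})]$. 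As a sanity check, when $\mathcal T=\mathcal S$ the normalization by $d=\binom{N-1}{n-1}$ turns the factor $\binom Nn-\binom{N-1}{n-1}$ from Proposition~\ref{prop:set_rl_u_stat_unbiased} into $N/n-1$. This matches $M_K$ with $q_K=1$.

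The first step is to establish two elementary identities. First, fix $S$ with $i\in S$. The score-function identity gives $\nabla_\theta F=\sum_{j\in S}\mathbb E[f_S s_j]$, and symmetry of $f$ together with the i.i.d. sampling makes every term equal, so $\mathbb E[f_S s_i]=\nabla_\theta F/n$. Second, if $i\notin S$, then $f_S$ is independent of $y_i$ and $\mathbb E[s_i]=0$, so $\mathbb E[f_S s_i]=0$. Since $\mathcal T$ is independent of $y_{1:N}$, both identities hold conditionally on $\mathcal T$.

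With these in hand, the conditional expectation of the $i$-th summand follows directly. The signal term gives $\frac{1}{C_i}\cdot C_i\cdot\frac{\nabla F}{n}$. For the baseline term, $\mathbb E[b\,s_i]=\frac1K\,C_i\,\frac{\nabla F}{n}$, because only the $C_i$ sets in $\mathcal T$ that contain $i$ contribute. This term is multiplied by $C_i$ (the number of sets in the outer sum) and divided by $C_i$. Together, the $i$-th summand has conditional expectation $\mathbf 1[C_i>0]\bigl(1-\tfrac{C_i}{K}\bigr)\frac{\nabla F}{n}$.

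Summing over $i$ and using the double-counting identity $\sum_i C_i=nK$ gives
\begin{align*}
\mathbb E\Bigl[\textstyle\sum_i s_i A^\sharp_{\mathrm{marg}}\Bigm|\mathcal T\Bigr]=\frac{\nabla F}{n}\Bigl(\bigl|\{i:C_i(\mathcal T)>0\}\bigr|-n\Bigr).
\end{align*}
Taking the expectation over $\mathcal T$, each index is covered with probability $q_K$. By symmetry of the uniform sampling this probability is the same for every $i$, and it is computed by counting the $K$-subsets avoiding $i$, of which there are $\binom{\binom{N-1}{n}}{K}$. This yields $(Nq_K-n)/n=\frac Nn q_K-1=M_K$.

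For positivity I will argue deterministically rather than through the formula for $q_K$. If $K>1$, then $\mathcal T$ contains two distinct $n$-subsets, so their union has at least $n+1$ elements. Hence $|\{i:C_i>0\}|\ge n+1$ almost surely, and therefore $M_K\ge 1/n>0$. The only place needing care is the baseline term, since it is the easiest to miscount: $b$ depends on all of $\mathcal T$, but only sets containing $i$ correlate with $s_i$, and the $C_i$ normalization exactly cancels the outer sum's multiplicity. I expect that bookkeeping to be the main step to verify, while the rest is routine.
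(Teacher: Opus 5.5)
Your proof is correct, and it takes a genuinely different route from the paper's.

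\textbf{The paper's route.} The paper integrates out $\mathcal T$ first, with $y_{1:N}$ held fixed, and treats the signal and baseline terms separately.
\begin{itemize}
\item For the signal term, it uses an exchangeability argument over the sets containing $i$ to obtain $\mathbb E_{\mathcal T}[\mathbf 1\{G\in\mathcal T\}/C_i]=q_K/d$. This converts the sampled sum into $q_K/d$ times the complete U-statistic sum over $\mathcal S$, which gives $\frac Nn q_K$ times the gradient.
\item For the baseline term, it uses $\Pr(U\in\mathcal T)=K/K_{\mathrm{all}}$ together with the vanishing of cross terms for $i\notin U$, which gives exactly the gradient.
\end{itemize}

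\textbf{Your route.} You reverse the order of integration and condition on $\mathcal T$. Each of the $C_i$ sets containing $i$ contributes $\frac1n\nabla_\theta\mathbb E_{y_{1:n}}[f(x,y_{1:n})]$ in expectation, so the $1/C_i$ normalization cancels exactly. Combined with $\sum_i C_i=nK$, this yields the per-collection identity
\[
\mathbb E\Bigl[\textstyle\sum_{i=1}^N\nabla_\theta\log\pi_\theta(y_i\mid x)\,A^\sharp_{\mathrm{marg}}(x,y_i\mid\mathcal T)\Bigm|\mathcal T\Bigr]=\frac{\bigl|\bigcup_{S\in\mathcal T}S\bigr|-n}{n}\,\nabla_\theta\mathbb E_{y_{1:n}\sim\pi_\theta(\cdot\mid x)}\bigl[f(x,y_{1:n})\bigr].
\]

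\textbf{What each approach buys.} In your argument, uniformity of $\mathcal T$ enters only at the last step, to evaluate $\mathbb E\bigl|\bigcup_{S\in\mathcal T}S\bigr|=Nq_K$, and you never need the exchangeability lemma. Your identity therefore shows more than the proposition asks: any choice of sets made independently of $y_{1:N}$ (deterministic, or random with any law) yields a nonnegative multiple of the set-RL gradient. It also replaces the paper's unstated ``routine check'' of positivity with the explicit deterministic bound $M_K\ge 1/n$ whenever $K\ge 2$. The paper's route keeps the estimator visibly tied to the full U-statistic over $\mathcal S$, paralleling Proposition~\ref{prop:set_rl_u_stat_unbiased}. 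However, it relies on the symmetry of uniform sampling already in the signal term, where your argument does not need it.
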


\begin{proof}
For notational convenience, write $\nabla_i := \nabla_\theta \log \pi_\theta(y_i \mid x).$ We first decompose the left hand side as:
\begin{align*}
&\mathbb E_{y_{1:N},\mathcal T}
\left[
\sum_{i=1}^N
\nabla_i
A^\sharp_{\mathrm{marg}}(x,y_i \mid \mathcal T)
\right] \\
&=
\underbrace{
\mathbb E_{y_{1:N},\mathcal T}
\left[
\sum_{i=1}^N
\nabla_i
\mathbf 1\{C_i>0\}
\frac{1}{C_i}
\sum_{\substack{S\in \mathcal T\\ i\in S}}
f_S
\right]}_{\text{Term 1}}
-
\underbrace{
\mathbb E_{y_{1:N},\mathcal T}
\left[
\sum_{i=1}^N
\nabla_i
\mathbf 1\{C_i>0\}
\widehat f_{\mathcal T}(x)
\right]}_{\text{Term 2}}.
\end{align*}

We first simplify $\text{Term 1}$. 
\begin{align*}
\mathbb E_{y_{1:N},\mathcal T} \left[ \sum_{i=1}^N \nabla_i \mathbf 1\{C_i>0\} \frac{1}{C_i} \sum_{\substack{S\in \mathcal T\\ i\in S}} f_S \right] &= \mathbb E_{y_{1:N},\mathcal T}
\left[
\sum_{i=1}^N
\nabla_i \sum_{G \in \mathcal{S}, y_i \in G} \frac{\mathbf 1\{G \in \tau\}}{C_i}f(x, G) \right] \\
&= \mathbb E_{y_{1:N}}
\left[
\sum_{i=1}^N
\nabla_i \sum_{G \in \mathcal{S}, y_i \in G} f(x, G) \mathbb{E}_{\tau}\left[ \frac{\mathbf 1\{G \in \tau\}}{C_i} \right] \right] \\
\end{align*}

Now, note that \begin{align*}
    \mathbb{E}_{\tau}\left[ \sum_{G \in \mathcal{G}(y_i)} \frac{\mathbf1\{G \in \tau\}}{C_i} \right] = \sum_{G \in \mathcal{G}(y_i)}  \mathbb{E}_{\tau}\left[ \frac{\mathbf1\{G \in \tau\}}{C_i} \right]. 
\end{align*} On the other hand, \begin{align*}
    \mathbb{E}_{\tau}\left[ \sum_{G \in \mathcal{G}(y_i)} \frac{\mathbf1\{G \in \tau\}}{C_i} \right] = \mathbb{E}_{\tau}\left[ \mathbf{1} \{ C_i > 0\} \right] =: q_K, 
\end{align*} where $q_K$ is the probability that $y_i$ belongs to one of the $K$ sampled sets in $\tau$.  

Next, we note that because the sets are sampled uniformly and independently of $y_{1:N}$, we have that, for any $G$ and $G'$ in $\mathcal{S}$, $$\mathbb{E}_{\tau}\left[ \frac{\mathbf1\{G \in \tau\}}{C_i} \right] = \mathbb{E}_{\tau}\left[ \frac{\mathbf1\{G' \in \tau\}}{C_i} \right].$$ As such $$\sum_{G \in \mathcal{G}(y_i)}  \mathbb{E}_{\tau}\left[ \frac{\mathbf1\{G \in \tau\}}{C_i} \right] = d\mathbb{E}_{\tau}\left[ \frac{\mathbf1\{G \in \tau\}}{C_i} \right] = q_K, $$ and so, $$\mathbb{E}_{\tau}\left[ \frac{\mathbf1\{G \in \tau\}}{C_i} \right] = \frac{q_K}{d}.$$

This allows us to simplify: 
\begin{align*}
\mathbb E_{y_{1:N},\mathcal T} \left[ \sum_{i=1}^N \nabla_i \mathbf 1\{C_i>0\} \frac{1}{C_i} \sum_{\substack{S\in \mathcal T\\ i\in S}} f_S \right] &= \mathbb E_{y_{1:N}}
\left[
\sum_{i=1}^N
\nabla_i \sum_{G \in \mathcal{S}, y_i \in G} f(x, G) \mathbb{E}_{\tau}\left[ \frac{\mathbf 1\{G \in \tau\}}{C_i} \right] \right] \\
&= \frac{q_K}{d} \mathbb E_{y_{1:N}}
\left[
\sum_{i=1}^N
\nabla_i \sum_{G \in \mathcal{S}, y_i \in G} f(x, G) \right]  \\
&= \frac{q_K}{d} \mathbb E_{y_{1:N}}
\left[ \sum_{G \in \mathcal{S}} f(x, G)
\sum_{y_i \in G}
\nabla_i \right]  \\
&= \frac{q_K}{d} \sum_{G \in \mathcal{S}}  \mathbb E_{y_{1:N}}
\left[ f(x, G)
\sum_{y_i \in G}
\nabla_i \right]  \\
&= \frac{q_K}{d} \binom{N}{n} \nabla_\theta \mathbb{E}_{y_{1:n}}[f(x, y_{1:n})]. 
\end{align*}

Next, we simplify $\text{Term 2}$. Since $ \widehat f_{\mathcal T}(x) = \frac{1}{K}\sum_{U\in \mathcal T} f(x, U),$ we have
\begin{align*}
    \mathbb{E}_{y_{1:N}, \tau}\left[ \sum_{i=1}^N \nabla_i \mathbf{1}\{ c_i > 0\} \widehat f_{\mathcal T}(x)\right] &= \mathbb E_{y_{1:N}} \left[ \sum_{i=1}^N \nabla_i \sum_{U\in \mathcal S} f(x, U) \mathbb E_{\mathcal T} \left[ \mathbf 1\{C_i>0\} \frac{\mathbf 1\{U\in \mathcal T\}}{K} \right] \right].
\end{align*}

Note that the inner expectation is conditioned on a fixed $U$. Now, for a fixed $U$ and a fixed $i$, there are two cases. Case 1: $y_i \in U$. In this case $\mathbb E_{\mathcal T} \left[ \mathbf 1\{C_i>0\} \frac{\mathbf 1\{U\in \mathcal T\}}{K} \right] = \frac{1}{K}\mathbb{P}(U \in \mathcal{T})= \frac{1}{K}\cdot \frac{K}{K_\mathrm{all}} = \frac{1}{K_\mathrm{all}}$. Case 2: $y_i \not\in U$. In this case, $f(x, U)$ is independent of $y_i$, in which case we use the fact that $\mathbb{E}_{y_{1:N}}[f(x, U)\nabla_i] = 0$. So, we can simplify as: 

\begin{align*}
    \mathbb{E}_{y_{1:N}, \tau}\left[ \sum_{i=1}^N \nabla_i \mathbf{1}\{ c_i > 0\} \widehat f_{\mathcal T}(x)\right] &= \mathbb E_{y_{1:N}} \left[ \sum_{i=1}^N \nabla_i \sum_{U\in \mathcal S} f(x, U) \mathbb E_{\mathcal T} \left[ \mathbf 1\{C_i>0\} \frac{\mathbf 1\{U\in \mathcal T\}}{K} \right] \right] \\
    &= \mathbb E_{y_{1:N}} \left[ \sum_{U \in \mathcal{S}}f(x, U) \sum_{y_i \in U} \frac{1}{K_\mathrm{all}} \nabla_i \right] \\
    &= \frac{1}{K_\mathrm{all}}  \sum_{U \in \mathcal{S}}\mathbb E_{y_{1:N}} \left[f(x, U) \sum_{y_i \in U} \nabla_i \right] \\
    &= \nabla_\theta \mathbb{E}_{y_{1:n}}\left[ f(x, y_{1:n})\right]. 
\end{align*}

Finally, we verify the expression for $q_K$. The event $C_i=0$ means that none of the $K$ sampled sets contains $i$. There are $K_\mathrm{all}-\binom{N-1}{n-1}$ sets in $\mathcal S$ that do not contain $i$, so
$$\Pr(C_i=0)
=
\frac{\binom{K_\mathrm{all}-d}{K}}{\binom{K_\mathrm{all}}{K}}.$$
Therefore,
$$q_K
=
\Pr(C_i>0)
=
1-\frac{\binom{K_\mathrm{all}-d}{K}}{\binom{K_\mathrm{all}}{K}}.$$

Combining these two terms and doing a routine check shows that the scaling factor is positive as long as $N > n$ and $K > 1$. \end{proof}

Now, we prove Lemma \ref{lem: probability_mass_shift_in_setRL}.  

\textbf{Lemma 5.1 (Restated)} In set reinforcement learning, assuming the objective function $f$ is symmetric in its arguments and the learning rate is $\alpha$, the shift in probability mass on a fixed generation $y$ after one step gradient update from $\pi_\theta^k$ to $\pi_\theta^{k+1}$ can be written as:
\begin{align*}
    \label{eq: logit_shift_in_set_RL}
    \log \pi_\theta^{k+1}(y \mid x) - \log \pi_\theta^{k}(y \mid x) 
    &= \alpha n \pi_{\theta}^k(y \mid x) \left[ \mathbb{E}_{y_{2:n}\sim \pi_\theta^k(\cdot \mid x)}\left[f(x, y, y_{2:n}) \right] - \mathbb{E}_{y_{1:n}\sim \pi_\theta^k(\cdot \mid x)}\left[f(x,  y_{1:n}) \right]\right] - \alpha C(\theta^k).
\end{align*}

\begin{proof}
This follows from \cite{hamid2026polychromicobjectivesreinforcementlearning} by noting that when a set is homogeneous, then it's scaffold value, $\Lambda_{f}(y^{\oplus n}; \pi_\theta^k, x)$, can be written as the probability of sampling $y$ times the marginal set advantage of $y$ - both under $\pi_\theta^k$. 
\end{proof}

\end{document}

%% file: macros.tex
\usepackage[table,dvipsnames,xcdraw]{xcolor}
\usepackage{colortbl}

\definecolor{polychrome_blue}{HTML}{045A8D}

\usepackage[most]{tcolorbox}
\usepackage{siunitx} 
\usepackage{array}
\usepackage{tabularx}
\usepackage{sectsty}
\usepackage{graphicx} 
\usepackage[toc]{appendix}
\usepackage{upquote}     
\usepackage{listings}
\usepackage{comment}

\usepackage{amsmath}
\usepackage{amssymb}
\usepackage{amsfonts}
\usepackage{fancyvrb,fvextra}
\usepackage{multirow}
\usepackage{booktabs} 
\usepackage{soul}
\usepackage{alltt}

\definecolor{lightgreen}{rgb}{0.88, 1, 0.88}

\definecolor{softgreen}{RGB}{224, 245, 210}  

\definecolor{pastelyellow_full}{RGB}{250, 238, 135}
\colorlet{pastelyellow}{pastelyellow_full!70}

\definecolor{D}{HTML}{a0e7a0}          

\DefineVerbatimEnvironment{MyVerbatim}{Verbatim}{
  commandchars=@\{\},
  breaklines=true,
  breaksymbol={}, 
}

\DefineVerbatimEnvironment{alltt2}{Verbatim}{
  breaklines=true,
  breaksymbol={}, 
}

\newtcolorbox{mybox}[1][]{
    title=#1,
    fonttitle=\small,
    fontupper=\small,
    left=2mm,
    right=2mm,
    top=1mm,
    bottom=0mm,
}

\usepackage[
    backend=biber,
    style=alphabetic,
    natbib=true,
    giveninits=true,
    maxalphanames=3,
    minalphanames=3,
    maxbibnames=99
]{biblatex}

\DefineBibliographyStrings{english}{%
  urlseen = {},
  urlfrom = {},
}

\usepackage[hidelinks]{hyperref}
\hypersetup{
    colorlinks=true,
    linkcolor=polychrome_blue,
    urlcolor=polychrome_blue,
    citecolor=polychrome_blue,
    filecolor=magenta
}

\usepackage[capitalize]{cleveref}

\usepackage{listings}

\lstdefinestyle{promptstyle}{
    basicstyle=\ttfamily\small,
    breaklines=true,
    columns=fullflexible,
    frame=none
}

%% file: math_commands.tex
\usepackage{amsmath,amsfonts,bm}

\def\eqref#1{equation~\ref{#1}}

\def\1{\bm{1}}

\DeclareMathAlphabet{\mathsfit}{\encodingdefault}{\sfdefault}{m}{sl}
\SetMathAlphabet{\mathsfit}{bold}{\encodingdefault}{\sfdefault}{bx}{n}

\newtheorem{lem}{Lemma}[section]

\newtheorem{prop}[lem]{Proposition}

\newenvironment{proof}{\noindent\textit{Proof.}}{\hfill$\square$}

